\newtheorem{definition}{Definition}
\newtheorem{lemma}{Lemma}
\newtheorem{corollary}{Corollary}
\newtheorem{assumption}{Assumption}
\DeclareMathOperator*{\argmin}{arg\,min}
\newif\ifdraft
\newcommand\@newaddpunct[1]{\ifnum\spacefactor>\@m \else#1\fi}
\newcommand{\para}[1]{\noindent\textbf{#1\unskip\@newaddpunct{.}}~~}
\newcommand{\adv}{\mathcal{A}}
\newcommand{\ustep}{\mathcal{U}}
\newcommand{\prover}{\mathcal{T}}
\newcommand{\fingerprint}[1][]{}
\NewDocumentCommand{\pw}{o}{
  \IfNoValueTF{#1}
    {\mathcal{P}}
    {\mathcal{P}(#1)}%
}
\newcommand{\prf}{PoL\xspace}
\newcommand{\pol}{PoL\xspace}
\newcommand{\rnaattack}{Adversarial RNA Step\xspace}
\newcommand{\pols}{PoLs\xspace}
\NewDocumentCommand{\oracle}{o}{
  \IfNoValueTF{#1}
    {Or}
    {Or(#1)}%
}
\NewDocumentCommand{\verifier}{o}{
  \IfNoValueTF{#1}
    {\mathcal{V}}
    {\mathcal{V}(#1)}%
}
\newcommand{\redst}[1][115pt]{\bgroup\markoverwith {\textcolor{red}{\makebox[0pt][l]{\rule[0.5ex]{#1}{0.4pt}}\rule[1ex]{#1}{0.4pt}}}\ULon}
\newcommand{\ie}{{i.e.,}\@\xspace}
\newcommand{\eg}{{e.g.,}\@\xspace}
\newcommand{\etal}{{et al.}\@\xspace}
\newcommand{\zhang}{Zhang \etal~\cite{zhang2021adversarial}\xspace}
\newcommand{\jia}{Jia \etal~\cite{jia2021proof}\xspace}
\newcommand{\re}{$\varepsilon_{repr}$\xspace} %
\newcommand{\nre}{$||\varepsilon_{\text{repr}}||$\xspace} %
\newcommand{\rd}{$d_{\text{ref}}$\xspace} %
\newcommand{\pWeights}{\mathbb{W}}
\newcommand{\pBIndex}{\mathbb{I}}
\newcommand{\pSign}{\mathbb{H}}
\newcommand{\pMeta}{\mathbb{M}}
\algnewcommand\algorithmicoptional{\textbf{Optional:}}
\algnewcommand\Optional{\item[\algorithmicoptional]}%
\newcommand\blfootnote[1]{%
  \begingroup
  \begin{NoHyper}
  \renewcommand\thefootnote{}\footnote{#1}%
  \addtocounter{footnote}{-1}%
  \end{NoHyper}
  \endgroup
}
\tikzset{>=latex} %
\colorlet{myred}{red!80!black}
\colorlet{myblue}{blue!80!black}
\colorlet{mygreen}{green!60!black}
\colorlet{myorange}{orange!70!red!60!black}
\colorlet{mydarkred}{red!30!black}
\colorlet{mydarkblue}{blue!40!black}
\colorlet{mydarkgreen}{green!30!black}
\tikzstyle{node}=[thick,circle,draw=myblue,minimum size=22,inner sep=0.5,outer sep=0.6]
\tikzstyle{node in}=[node,green!20!black,draw=mygreen!30!black,fill=mygreen!25]
\tikzstyle{node hidden}=[node,blue!20!black,draw=myblue!30!black,fill=myblue!20]
\tikzstyle{node convol}=[node,orange!20!black,draw=myorange!30!black,fill=myorange!20]
\tikzstyle{node out}=[node,red!20!black,draw=myred!30!black,fill=myred!20]
\tikzstyle{connect}=[thick,mydarkblue] %
\tikzstyle{connect arrow}=[-{Latex[length=4,width=3.5]},thick,mydarkblue,shorten <=0.5,shorten >=1]
\tikzset{ %
	node 1/.style={node in},
	node 2/.style={node hidden},
	node 3/.style={node out},
}
\newcommand{\cat}[1]{\;\operatorname{\big\|}\;}
\newcommand{\del}[1]{}
\begin{document}

\title{Proof-of-Learning is Currently More Broken Than You Think}

\author{Congyu Fang\text{*}\IEEEauthorrefmark{3}\IEEEauthorrefmark{4}, Hengrui Jia\text{*}\IEEEauthorrefmark{3}\IEEEauthorrefmark{4}, Anvith Thudi\IEEEauthorrefmark{3}\IEEEauthorrefmark{4}, Mohammad Yaghini\IEEEauthorrefmark{3}\IEEEauthorrefmark{4}, \\Christopher A. Choquette-Choo\IEEEauthorrefmark{5}, Natalie Dullerud\IEEEauthorrefmark{3}\IEEEauthorrefmark{4}, Varun Chandrasekaran\IEEEauthorrefmark{2}, Nicolas Papernot\IEEEauthorrefmark{3}\IEEEauthorrefmark{4}\vspace*{0.15cm} \\  University of Toronto\IEEEauthorrefmark{3}, Vector Institute\IEEEauthorrefmark{4}, Microsoft Research\IEEEauthorrefmark{2}, Google Research (Brain Team) \IEEEauthorrefmark{5}}
\maketitle

\begin{abstract}

Proof-of-Learning (PoL) proposes that a model owner log{s} training checkpoints %
to establish a proof of having expended the {computation} %
necessary for training. 
The authors of PoL forego cryptographic approaches and trade rigorous security guarantees
for scalability to deep learning.
They empirically argued the benefit of this approach by showing how spoofing%
{---computing a proof for a stolen model---}%
is as expensive as  obtaining the proof honestly by training the model. 
However, recent work {has} provided a counter-example and {thus has} invalidated this observation.

\looseness=-1
{In this work we demonstrate, {first,} that while it is true that current PoL verification is not robust to adversaries, recent work has largely underestimated this lack of robustness. This is because existing spoofing strategies are either unreproducible or target weakened instantiations of PoL---meaning they are easily thwarted by changing hyperparameters of the verification. Instead, we introduce the first spoofing strategies that can be reproduced across different configurations of the PoL verification and can be done for {a} fraction of the cost of previous spoofing strategies. This is possible because we identify key 
vulnerabilities of PoL
and systematically analyze the underlying assumptions needed for robust verification of a proof.
On the theoretical side, we show how realizing these assumptions reduces to open problems in learning theory. 
We conclude that one cannot develop a provably robust PoL verification mechanism without further understanding of optimization in deep learning.}
\blfootnote{\text{*}Equal Contribution.}

\end{abstract}

\section{Introduction}
\label{sec:intro}

Scaling verified computing to deep learning is difficult due to the inability to express optimization problems in a format amenable for {efficiently} generating cryptographic proofs~\cite{ozdemir2020unifying}. Thus, relatively little progress has been made in applying cryptographic primitives to {attesting the integrity} of training algorithms. 

To address this bottleneck, \jia explored non-cryptographic {approaches for} verifying {if an entity executed stochastic gradient descent {(SGD)}}---the canonical  algorithm for training deep neural networks (DNNs). {The said entity would then} obtain a {``proof''} of computation expended towards training. They propose the Proof-of-Learning (\pol) protocol for a prover to attest to the integrity of a training run by logging the intermediate states achieved by the learner. We describe the information that is logged in {Section}~\ref{sec:background}, {but} for now, it is sufficient for the reader to assume that the log contains the DNN's weights after each step of gradient descent, \ie the {\em training trajectory}. This log becomes a \textit{proof}, which can be verified by an external party; the verifier simply needs to repeat (some of) the training steps logged in the proof ({a.k.a.} duplicated execution) and compare the reproduced steps to the ones logged. This comparison takes place in the {\em weight space}. If this verification succeeds, the proof constitutes evidence that the prover ran the training algorithm correctly and obtained the model legitimately. 

While the \pol protocol is simple, \jia  were unable to provide a formal analysis of security for the proof verification mechanism they {proposed}. 
Instead, they demonstrated empirically that several strategies to find {\em spoofs}---proofs that {successfully} pass verification but were not obtained honestly by training the model---did not give the adversary an advantage. However, \zhang {contradict this} claim. They propose a strategy that would give the adversary an advantage by returning a proof that passes verification at a {lower} computational cost.

In this work, we reconcile these conflicting claims. While the conclusion of \zhang is correct, the counter-example they present fails to identify systemic vulnerabilities of \pol. {Our analysis shows that} they target a weakened instantiation of \pol. This is evidenced by the fact that their results are either not reproducible or easily thwarted by adjusting hyperparameters of the verification mechanism{, as we show in Section~\ref{ssec:limitations_prior_spoofing}}. Instead, 
{our proposed attacks are \textit{always} guaranteed to generate a spoof for the proposed \pol verification mechanism.} Further, this can be done for a \textit{fraction of the cost} needed by past spoofing strategies. We {arrive at} this understanding of {the} fundamental vulnerabilities of PoL by systematically studying the two roles \pol verification plays: (a) efficiently verifying that a sequence of model updates contained in a proof are valid, and (b) establishing precedence by {preventing spoofing}.

\looseness=-1
The first role, {that of} proof verification, reduces verification to a detection problem {\ie detecting valid training trajectories}. Indeed, multiple sources of noise make it impossible for the verifier to exactly reproduce each step of gradient descent contained in the  prover's proof. Avoiding noise in learning is not always possible{;} it is sometimes {even} desirable. Accelerators like {graphics processing units (GPUs)} introduce noise in training but {they} are needed to scale learning to large datasets. This means that some tolerance to noise needs to be built into the verification process. {This} creates a sensitivity-specificity trade-off {such that} {the verifier must tune the noise threshold to simultaneously maximize the acceptance of legitimate steps and rejection of invalid steps.} {However,} this tuning is currently not possible because we lack a precise model of how this noise impacts training, and consequently the trajectory taken by the learner. {Thus,} we find that we cannot formulate a proof verification mechanism that is optimal, \ie provably minimizes an adversary's capability to disguise their trajectory within the noise inherent to training.

These difficulties are {exacerbated} by {previously proposed} approximations to the verification mechanism{, which aim to} improve its computational efficiency. For instance, \jia propose to verify only a subset of {the} updates in a proof. {However, as we demonstrate, this approach} opens {an} attack surface for {adversaries} to force the verification mechanism to verify a subset of updates of their choice. {This leads us to formalize the assumption implied by \jia, and identify the gap in theory required to develop a provably robust and efficient verification mechanism.}

The second role of a proof (and verification) provides a different yet {equally} fundamental explanation for {the difficulty of proving} robustness. Crafting a valid proof, once given access to a trained model, is assumed to be at least as difficult as generating the proof {naturally} as the model is being trained. However, {formally} proving this reduces to {\em open problems} in learning theory. In the presence of a non-convex learning objective, as is the case in deep learning, multiple models corresponding to different local minima can be returned by {SGD}. {Then, to obtain a formal guarantee on the computation (to spoof a valid proof) in this setting, we must know if knowledge of one such minimum guarantees a unique trajectory.} This is an open problem and would have implications beyond the \pol protocol, for instance towards knowledge transfer in machine learning. 

{Based on} this analysis, we conclude that two classes of solutions are needed to address these limitations of the current verification mechanism. First, we need better models of noise dynamics in learning. For instance, we show how capturing the direction alongside the magnitude of individual gradient steps helps refine such models of noise. For the second, we will need methods to guarantee that having access to the final weights {does} not enable spoof generation at lower {computational cost} than training. We identify new \pol protocols that {highlight} commitment mechanisms as a promising direction. We expect that particular care will need to be given to committing to a dataset in order to establish precedence. 
To summarize, our contributions are:
\begin{itemize}
\item We systematically study the assumptions needed for \pol's robustness based on the two roles verification plays: (a) efficient verification with tolerance to noise; and (b) establishing precedence.
\item We highlight the need for better models of noise in learning to instantiate an optimal verification mechanism {and} illustrate one potential avenue for doing so. {However,} obtaining optimal verification mechanisms remains an open problem. As a consequence, we show in {Section}~\ref{sec:fundamentals_revisited} that our analysis of noise tolerance can be exploited to mount {the first practical attacks against \pol that are {both} reproducible across different PoL {configurations and} {are} significantly more efficient than prior work}. The code is provided at 
\url{https://github.com/cleverhans-lab/practical-attacks-against-pol}
.
\item However, theoretical limitations in establishing precedence with \pol do not yet result in practical attacks (see {Section}~\ref{ssec:stochastic_spoofing_experiment}). {We formulate reductions of the precedence problem as a first step in understanding the possibility of robustness for this assumption (see {Section}~\ref{sec:theory_cheap_spoofing}).}
\end{itemize}

\section{Background}
\label{sec:background}

{\pol} relies on an asymmetry in the training protocol arising from the highly complex and non-linear nature of training {DNNs}~\cite{saxe2013exact}. {The authors of \pol draw connections with proof-of-work~\cite{dworknaor1992, preneel_proofs_1999}, as they demonstrate how gradient inversion (see Section VII-B in Jia~\etal~\cite{jia2021proof}) is at least as expensive as gradient computation.} Thus, the authors hypothesize that gradient calculations on data play a role similar to one-way functions~\cite{diffie1976new}. {We} will revisit this {hypothesis} in our manuscript. Similar to proof-of-work, \pol should{, }ideally{,} prevent an entity from claiming they have trained a model without having spent at least {a} comparable {amount of} computational effort. 

In the rest of the {paper} we use lower-case, bold-faced notation to capture random variables. See Table~\ref{tab:notation_attack} {in Appendix~\ref{app:notations}} for {a list of} commonly used {notations}.

\subsection{Primer on \pol}
\label{subsec:primer}

The framework~\cite{jia2021proof} assumes the prover $\prover$ honestly trains a machine learning model in $T$ steps to obtain parameters $W_T$. {\pol is defined as follows.} 

\begin{definition}\label{definition:prf}
For a prover $\prover$, a proof is denoted as $\pw[\prover, {W_T}] = (\pWeights, \pBIndex, \pSign, \mathbb{A})$ where all elements of the tuple are ordered sets indexed by the training step $t \in [T]$. In particular, (a) $\pWeights$ is a set of model-specific information obtained during training{;} (b) $\pBIndex$ denotes information about the specific data points used to obtain each state in $\pWeights${;} (c) $\pSign$ represents cryptographic signatures of the training data{;} and (d) $\mathbb{A}$ is auxiliary information that may or may not be available to an adversary $\adv$, such as hyperparameters $\pMeta$, model architecture, optimizer, and loss choices. 
\end{definition}

If $\prover$ logs information for every step $t$, the exact training process (culminating at $W_T$) should ideally be reproducible. The memory footprint of {$(\pBIndex, \pSign, \mathbb{A})$} is often small. However, {storing the model weights (\ie a part of $\pWeights$)} incurs high overhead. Thus, it is common practice to log weights periodically at every $k^{th}$ step; {$k$ is known as the checkpointing interval.}

Motivations for \pol include substantiating {ownership claims for a} specific {set of} weight{s} $W_T$ or verifying the correctness of delegated computations. The latter may arise in the context of distributed learning~\cite{dean2012large}. The former is motivated by the threat of model stealing~\cite{tramer2016stealing}: the \pol protocol increases the cost of an adversary as it is now required to generate a proof for the model it has stolen {or obtained through insider access.}

\vspace{2mm}
\noindent{\bf Verification:} Without loss of generality, the verifier begins with $W_t$, and {utilizes} the information in $(\pBIndex, \pSign, \mathbb{A})$ to perform $k$ steps of training to achieve $W'_{t+k}$; the difference between two such weights is termed an {\em update}. The {newly obtained $W'_{t+k}$} is compared against the next stored weight $W_{t+k}$. In general, $W'_{t+k} \neq W_{t+k}$ due to entropy arising from low-level components, \eg low-level libraries and hardware~\cite{jagielski2019high, pham2020problems, Zhuang2021randomness}. We refer to such differences as noise or \emph{stochasticity} in training. This {noise} limits the maximum {value of} $k$ that can be chosen. $W'_{t+k}$ is deemed valid if $d(W_{t+k}, W'_{t+k}) < \delta$, \ie the recreated weight $W'_{t+k}$ is within a $\delta$ error threshold of the prover-generated weight $W_{t+k}$ using some distance function $d$ (typically an $\ell_p$ norm). The threshold $\delta$ is {set} by the verifier prior to proof verification and is tuned such that $\varepsilon_{repr}(t) < \delta \ll $ \rd, where $\varepsilon_{repr}(t)$ captures the reproduction error due to low-level randomness (at various training steps) and \rd is a reference distance estimated by re-running the training protocol {with varying sources of stochasticity} and recording the deviation upon completion. \jia also {defines} normalized reproduction error ($||\varepsilon_{repr}||$) as $\max_t \varepsilon_{repr}(t)/$\rd.

As {the notation} will become important when considering spoofing attacks {in Section~\ref{sec:theory_cheap_spoofing}}, we formalize the set of \textit{valid proofs} ending in a particular weight $W_T$ {similar to what was done by} Thudi \etal~\cite{thudi2021necessity}: Let $A_{D,W_T}$ be the set of $(\mathbf{g},d,\boldsymbol{\delta})$-proofs ending in $W_T$ generated by a specific dataset $D$, \ie those obtained using update rules $\mathbf{g}_i \in \mathbf{g}$ passing thresholds $\boldsymbol{\delta}$ {(\ie $||\varepsilon_{repr}||<\delta$)} in metric $d$. With this formalization, we can associate to {\em honest training} a distribution on the set of valid proofs ending in $W_T$.

\begin{definition}[Honest Training]
Honest training is given by a probability measure $\mu$ on the event space $A_{D,W_T}$.
\end{definition}

\noindent{\bf Efficient Verification:} To speed up verification, it is possible to use heuristics to select {\em which} specific updates to verify. {However, this} introduces a trade-off between computation savings and verification accuracy. \jia utilize the {\em top-$Q$} mechanism for probabilistic verification. {The} verifier selects the $Q$ largest
(in their $\ell_p$ norm) updates of each epoch for verification; the intuition for this is that larger updates exist primarily in falsified proofs.

\subsection{Creating Spoofs} 
\label{subsec:spoofs}

\jia define a spoof as any proof that {passes verification and} requires {\em lesser computation} to obtain than honest training {would} {(see Table~\ref{tab:spoofing} in Appendix~\ref{app:notations} for the full list of spoof categories they introduce)}. One of the key contributions of this paper is to analyze why, {if at all}, such spoofs exist. For now, we briefly review spoofing schemes known in the literature.

\vspace{1mm}
\noindent{\bf 1. Sequence Inversion~\cite{jia2021proof}:}
We assume an adversary has access to $W_T$ and the data used to train the model, but not to $\pBIndex$ {or} $ \mathbb{A}$. {Here, an adversary} aims to invert gradient descent: given $W_T$, find a corresponding $W_{T-1}$ that was used to obtain it. The authors show that such a process is difficult due to {the} increasing entropy {as training progresses} and {that it} is computationally lower bounded by the cost of honest training.

\vspace{1mm}
\noindent {\bf 2. Directed Retraining~\cite{jia2021proof}:} Under the same assumptions as {(1), an adversary employing this strategy aims to} either {create} (a) structurally correct proofs {\ie spoofs that pass verification by generating invalid updates} or (b) {shorter ones by} artificially directing the weights to $W_T$ quicker. The authors showed that unless the entire proof {was} valid, any discontinuities produced by methods in (a), \eg concatenating proofs, would be detected by a verification mechanism that {checked} the {few} largest updates first. They {also} argued that approaches in (b) require custom training algorithms (\eg regularizers, loss functions, etc.) with direct knowledge of the desired final weights $W_T$, and thus {would} fail verification.

\vspace{1mm}
\noindent{\bf 3. Adversarial Examples for \pol:} \zhang introduce two techniques to generate shorter {\em structurally correct} spoofs (see {Section IV-A} in \jia), based on evasion~\cite{goodfellow2014explaining, szegedy2013intriguing} (\ie perturbing inputs to create adversarial examples which result in erroneous model predictions). These strategies assume the adversary has access to $W_T$ and the data that was used to obtain the proof, but no other information. Herein, we define $\ustep_k(W_i, X_i)$ to represent the update to go from $W_i$ to $W_{i+k}$ with data $X_i$. We utilize $\hat{W}$ to denote a weight created by the adversary.

\vspace{1mm}
\noindent{\bf 3.1. Synthetic Adversarial Update:} The objective of the adversary is to create synthetic data $\hat{X}$ such that{, w.l.o.g,} the following is possible: $\hat{W}_{t+n} = \ustep_n(\hat{W}_t, \hat{X})$. {The synthetic data $\hat{X}$} is termed an ``adversarial example'' by the authors. Since generating these adversarial examples requires additional computation, the adversary {only performs} $\hat{T}-1$ steps of legitimate training with $\hat{T} \ll T $ {leaving enough computation for the single step from $\hat{W}_{\hat{T}-1}$ to  $W_T$ so that the overall computation is still lesser than honest training.} {This is their ``Attack 1''.} However, the authors only conceptually describe the attack and do not evaluate it because they find it is difficult for the optimization to converge. Despite our best efforts, we also failed to have our implementation of this strategy converge and synthesize data that satisfies the adversary's objective. Thus, given that \zhang do not provide evidence that this attack strategy can succeed for the adversary, {we will later provide an intuition as to why ``Attack 1'' is unlikely to succeed, but }we do not consider it further in our work.

\vspace{1mm}
\noindent{\bf 3.2. Synthetic Checkpoint Initialization:} \zhang {also} propose a technique to choose intermediary weights that exploit the threshold $\delta$ picked by the verifier. Recall that this threshold allows the verifier to tolerate noise induced by stochasticity in SGD. The adversary can choose pairs of weights $\hat{W}_{t}$ and $\hat{W}_{t+k}$ such that $d(\hat{W}_{t}, \hat{W}_{t+k}) \ll \delta$, $\forall t, k${, for example, by} linearly interpolating between a chosen $\hat{W}_0$ and $W_T$. The authors also propose techniques to minimize the distance $d(\hat{W}_0, W_T)$ between the model initialization and final weights which {naturally} further minimizes {each intermediate update distance $d(\hat{W}_{t}, \hat{W}_{t+k})$}. {To achieve this,} data $\hat{X}$ is initialized with training data and perturbed such that $d(\hat{W}_t, \ustep_k(\hat{W}_t, \hat{X})) \approx 0 \ll \delta$ {(note that, semantically, the data is synthesized to obtain ``small'' updates instead of the ``correct'' updates as in Attack 1)}; this results in $d(\hat{W}_{t+k}, \ustep_k(\hat{W}_t, \hat{X})) \ll \delta$ as $d(\hat{W}_{t}, \hat{W}_{t+k}) \ll \delta$ and thus passes verification. {The authors} call this ``Attack 2''.\footnote{The authors also introduced ``Attack 3'', which is a more computationally efficient, yet conceptually similar implementation of Attack 2. Given that the two achieve the same performance, we exclusively consider {Attack} 2 in {Section}~\ref{ssec:on_finding_cheap_pols}.}

\subsection{Limitations of Prior Spoofing Strategies}
\label{ssec:limitations_prior_spoofing}

Both \zhang and {our team} failed to implement an instance of Attack 1 that works. Here, we provide a theoretical intuition {as to} why. Attack 1 is essentially trying to perturb a low-dimensional input to modify a much higher-dimensional output {(\ie crafting an adversarial example that can output an arbitrary gradient update)}. {For example, a CIFAR-10 data point is a $32\times32\times3=3072$-dimensional vector but a gradient update for a ResNet-20 is a $270,000$-dimensional vector.} The {(a) significant difference in dimensions, and (b) fact that the input is discrete (only integers from 0 to 255), whereas the update is continuous,} means that the possible outputs {from Attack 1} is a tiny subset of all possible model updates one could need to replicate. {This makes Attack 1 unlikely to succeed.} 

{On the other hand, Attack 2 is more practical as the goal is only to reduce the magnitude of the high-dimensional 
{gradient.} However, {it is still difficult to achieve gradients with near zero magnitude} 
due to {how deep learning algorithms are implemented}. For example, even in the case that the model's prediction on a data point perfectly matches its label, values added to certain layers of the model (\eg softmax) for numerical stability would cause the loss value{, and thus the gradient update,} to be non-zero. Therefore, Attack 2 cannot work when $\delta$ is small, which is the case when $k$ is small {(see Section VI-C in \jia)}. {However, } \zhang made an assumption that the attacker is able to set the value of $k$ to obtain $\delta$ that is large enough. For instance, they did so in their experiments on CIFAR-100 and set $k=100, \delta=0.1$ ({Note}: this means that after training for 100 steps, the model can be $10\%$ different). {T}his is an incorrect assumption: $k$ should be set by the verifier and is thus out of the attacker's control. In the case of CIFAR-100, we empirically found that Attack 2 would not converge if $k$ and $\delta$ are reduced by an order of {magnitude}. In other words, Attack 2 only works for certain settings of \pol verification and the verifier can easily prevent it by using a small $k$.}

{Lastly, both Attack 1 and Attack 2 propose {using} gradient-based methods to solve non-convex optimization problems whose objective functions already contain a derivative. This means a second-order derivative has to be computed for every single step, which is unnecessarily expensive ({\ie the same spoofing effect can be achieved without the second-order derivative, see} {Section}~\ref{ssec:good_training}). {In addition, while \zhang only focused on exploiting the non-optimal noise thresholding mechanism, they didn't analyze its fundamental causes and other sources of vulnerabilities. }{In contrast, we will systematically explore the vulnerabilities of \pol and propose new attacks that stem from them. The proposed attacks are {computationally affordable} and do not over-assume the adversary's power. Furthermore they are also generally applicable to different \pol configurations.}}

\section{Threat Model}
\label{subsec:threat_model}

In the rest of the paper, we consider the following threat model unless otherwise specified{:}

\begin{enumerate}
\item An adversary $\adv$ has complete knowledge of $W_T$ and the architecture used to obtain it.
\item $\adv$ has knowledge {of the verification parameters $\delta$ and $Q$,} and the {selection mechanism for which updates will be verified.}
\item $\adv$ does {\em not} have access to any other information from training: e.g., intermediate training steps, or the sources of randomness used to obtain $W_T$. 
\item $\adv$ has access to the training dataset (or distribution). {Note this is assumed by previous works~\cite{jia2021proof,zhang2021adversarial}, and so we follow it when studying the robustness of \pol. However, our proposed attacks do not rely on this because the verifier is capable of keeping the dataset secret (refer Section~\ref{sssec:data_commitment}).}
\end{enumerate}

Finally, our primary analysis is for DNNs with non-convex loss landscapes as they are more computationally expensive to train and {are more likely to be targeted by model stealing attacks.}

\section{Efficient Verification of Valid Proofs}
\label{sec:giving_guarantees_with_pol}

Recall that one of the two key roles played by the proof verification mechanism is efficiently verifying whether a sequence of model updates can be obtained by SGD. This is done by reproducing these updates, and if they pass, they are termed ``valid.'' To understand the robustness of \pol verification, we theoretically analyze the role it plays from two perspectives: (a) {\em correctness} (Section G.1 in \jia), meaning that the honest model trainers' proofs should be correctly validated (even on a different hardware/software stack); and (b) {\em verification efficiency} (Section G.3 in \jia). We first introduce the necessary conditions for these desiderata to be satisfied, namely {\em reproducibility and representativeness}. We will formalize these assumptions later in Section~\ref{ssec:good_training} and Section~\ref{sssec:topq_theory} respectively:

\begin{enumerate}
\item {\bf Reproducibility}: Individual gradients are reproducible up to a small error $\delta \ll 1$ if the per-step training data and metadata required to obtain them are logged and controlled for.
\item {\bf Representativeness:} The validity of a sequence of model updates can be implied by the validity of a smaller subset. Thus, verifying the smaller subset is equivalent to verifying the larger one in terms of security guarantees.
\end{enumerate}

\subsection{On Correctness of Step-wise Verification}
\label{ssec:good_training}

\begin{tcolorbox}[after={\refstepcounter{footnote}\footnotetext{In Section~\ref{subsec:spoofs} we introduced the notation $\ustep_k(W_i, X_i)$ to represent model updates with an implicit assumption that metadata $M$ is utilized. Here, we overload the notation to explicitly include the metadata.}}]
\begin{assumption}[Reproducibility]
\label{assum:reproduce}
For any sequence of intermediate model states obtained from honest training for $T$ steps, $\{ W_1, \cdots, W_T \}$, if training step $t$ is reproduced (\eg by the verifier) using the same update rule $\mathcal{U}_1$, training data $D_t$, and metadata $M_t$ to produce $\widehat{W}_{t+1} = \mathcal{U}_1(W_t,D_t, M_t)$.\footnotemark[\the\numexpr\value{footnote}+1] Then there exists a bound, $\delta$, for this reproduction error such that $||W_{t+1}-\widehat{W}_{t+1}|| \leq \delta$ $\forall t \in [T-1]$.
\end{assumption}
\end{tcolorbox}

\begin{figure}[t]
\centering\hspace{-5mm}
\includegraphics[height=4cm]{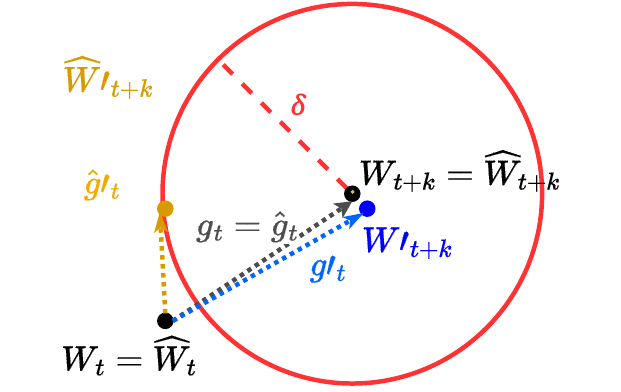}
\caption{ \textbf{Attacks targeting loose noise thresholds $\delta$.} For a legitimate step, the discrepancy between $g_t$ (update from the proof, in black) and $g'_t$ (verifier-reproduced update, in blue) should be solely due to the noise from hardware/software, and should lie within the $\delta$ ball. However, for loose thresholds, the adversary may be able to create adversarial metadata such that the verifier-reproduced update $\hat{g}'_t$ (in yellow) points in a different direction than the one in the proof ($\hat{g}_t$, in black) and lies within the $\delta$ ball. The verifier will incorrectly accept this step despite it being ``different" from $\hat{g}_t$.}
\label{fig:delta_not_tight_illustration}
\end{figure}

A fundamental issue facing \pol verification is how the noise in training manifests itself, which translates to what degree the training can be reproduced. Ideally, this is to be captured by the verification threshold $\delta$ (see Assumption~\ref{assum:reproduce}). The original work~\cite{jia2021proof} proposed a step-wise verification method based on the (over)simplified assumption that $||W_{t+1}-\mathcal{U}_1(W_t,D_t,M_t)||$ is i.i.d for all $t$. Thus, $\delta$ was assumed to have a fixed value. However, if the $\delta$ threshold is not tight, then the adversary may be able to exploit it to generate spoofs. One way of doing so is by designing data and/or metadata that makes the verifier's reproduced update point in a new direction (though still within the $\delta$-ball) compared with the original update in the proof. This spoof contains a new trajectory to a new final weight $W'_t$, as shown in Figure~\ref{fig:delta_not_tight_illustration}. In fact, Attack 2 by \zhang implicitly exploits this vulnerability. Thus, we seek a complete (\ie necessary and sufficient) characterization of the distribution(s) of this noise throughout training, so as to enable the selection of the optimal (tightest) thresholds. This will enable a verifier to guarantee that they accept valid training steps while minimizing the threat surface.

To this end, let us assume the verifier produces \texttt{ACCEPT} or \texttt{REJECT} decisions for a training step. We can quantify the decision process by the True Positive Rate, $TPR=\frac{TP}{(TP + FN)}$ where $TP$ is the number of accepted valid gradients and $FP$ is the number of rejected but valid gradients. Therefore, the question becomes: ``How large is the minimum threshold to retain some fixed TPR?'' We formalize these requirements on the verifier in Definition~\ref{def:set-verifier}. 

\begin{definition}[$(d, \tau)$-verification strategies]
\label{def:set-verifier}
For proofs of length $T$, $V_{d, \tau}$ is defined as the set of all verification strategies which use per-step thresholds $\{ \delta_1 {(W_1,M_1)}, \cdots, \delta_T {(W_T,M_T)}\}$ (possibly depending on the intermediate checkpoints $W_t$ and corresponding training metadata $M_t$ such as hyperparameters) and produces verification decisions \texttt{ACCEPT}, \texttt{REJECT} with a required per-step TPR of at least $\tau$ for any weight, based on a metric $d$ over the training steps.
\end{definition}

Our question is now reduced to the following: ``Does an optimal $(d,\tau)$-verifier exist?'' Note that here optimality is defined as having the {\em smallest per-step threshold} (for each step) that maintains the TPR rate of $\tau$.

\subsubsection{Existence of Optimal Strategies}
\label{thudi_a}

To take the first step toward proving the existence of the optimal step-wise verification strategy for a given metric $d$, we begin by proving its existence for per-step $\ell_2$ norm verification in Lemma~\ref{lem:min_threshold}. We then generalize this result to an entire class of metrics that satisfy the requirement that the boundary of their metric balls has Lebesgue measure $0$. Importantly, this includes all $\ell_p$ metrics and cosine similarity, which are commonly used in machine learning. This is obtained for free in Corollary~\ref{cor:all_metrics} from the proof of Lemma~\ref{lem:min_threshold}.

\begin{lemma}[Minimum Threshold]
\label{lem:min_threshold}
Let $\mathbf{g}_i \in \mathbf{W}$ be the random variable for the $i^{th}$ update from a given checkpoint $W_i$, where $\mathbf{W} = \mathbb{R}^n$ is the weight space. Assume it is absolutely continuous~\cite{folland1999real} with respect to the Lebesgue measure. Then, for a given TPR $\tau$, there exists a minimum $\ell_2$ threshold $\delta$ centered at the mean of $\mathbf{g}_i$ s.t. the TPR for $\delta$ is $\tau$, and for any $\delta' < \delta$, the TPR is less than $\tau$.
\end{lemma}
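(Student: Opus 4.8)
The plan is to reduce the statement to a one–dimensional fact about the cumulative distribution function (CDF) of the scalar radius $\mathbf{y}_i := \lVert \mathbf{g}_i - \mu \rVert_2$, where $\mu := \mathbb{E}[\mathbf{g}_i]$ (which we take to be finite; if it were not, the argument goes through verbatim for any fixed center). For a threshold $\delta \ge 0$, the TPR of the $\ell_2$ verifier centered at $\mu$ is exactly $\mathrm{TPR}(\delta) = \Pr[\lVert \mathbf{g}_i - \mu \rVert_2 \le \delta] = F(\delta)$, where $F$ denotes the CDF of $\mathbf{y}_i$. So the lemma is equivalent to the assertion that $F$ attains the value $\tau$, and that the smallest argument at which it does so is well defined and characterized by $F$ being strictly below $\tau$ to its left.

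The key step is to show that $F$ is continuous on $[0,\infty)$. A CDF is always non-decreasing and right-continuous, so continuity is equivalent to the absence of atoms, i.e. $\Pr[\mathbf{y}_i = r] = 0$ for every $r \ge 0$. But $\{\mathbf{y}_i = r\} = \{\omega : \mathbf{g}_i(\omega) \in S_r\}$ where $S_r = \{x \in \mathbb{R}^n : \lVert x - \mu \rVert_2 = r\}$ is either a single point ($r=0$) or an $(n-1)$-sphere ($r>0$); in either case $S_r$ has Lebesgue measure $0$ in $\mathbb{R}^n$. Since $\mathbf{g}_i$ is absolutely continuous with respect to Lebesgue measure, its law assigns probability $0$ to every Lebesgue-null set, hence $\Pr[\mathbf{y}_i = r] = 0$. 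Therefore $F$ is continuous; moreover $F(0) = \Pr[\mathbf{g}_i = \mu] = 0$ and $F(\delta) \to 1$ as $\delta \to \infty$.

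Given continuity and monotonicity of $F$, the remainder is elementary. For $\tau \in (0,1)$ the set $\mathcal{S}_\tau := \{\delta \ge 0 : F(\delta) \ge \tau\}$ is nonempty (since $F \to 1 > \tau$) and bounded below by $0$, so $\delta^* := \inf \mathcal{S}_\tau$ exists. Taking a sequence $\delta_k \downarrow \delta^*$ with $F(\delta_k) \ge \tau$ and using continuity gives $F(\delta^*) \ge \tau$; conversely every $\delta$ with $0 \le \delta < \delta^*$ lies outside $\mathcal{S}_\tau$, so $F(\delta) < \tau$, and letting $\delta \uparrow \delta^*$ with continuity gives $F(\delta^*) \le \tau$. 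Hence $F(\delta^*) = \tau$, and the inequality just displayed says precisely that $\mathrm{TPR}(\delta) < \tau$ for every $\delta < \delta^*$, which is the claimed minimality. (The degenerate case $\tau = 1$ is handled the same way when the law of $\mathbf{y}_i$ has bounded support, with $\delta^* = \operatorname{ess\,sup}\mathbf{y}_i$; otherwise no finite threshold achieves TPR $1$, so one restricts to $\tau < 1$.)

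The only subtlety — and the step I would be most careful to state cleanly — is the atom-free claim: that absolute continuity of $\mathbf{g}_i$ forces the radius distribution to be non-atomic, via the observation that each sphere $S_r$ is Lebesgue-null; everything downstream is just the intermediate value theorem plus an infimum argument. This framing also makes the promised generalization transparent, since $S_r$ is exactly the boundary of the metric ball of radius $r$: the identical proof applies to any metric $d$ whose balls have boundary of Lebesgue measure $0$, which is what Corollary~\ref{cor:all_metrics} will assert for all $\ell_p$ norms and for cosine similarity.
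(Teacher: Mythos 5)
Your proof is correct and rests on exactly the same two pillars as the paper's: absolute continuity of $\mathbf{g}_i$ makes spheres (the boundaries of $\ell_2$ balls) null, which gives continuity of $\delta \mapsto \mathrm{TPR}(\delta)$; and a completeness argument then extracts the minimizing threshold. The only cosmetic difference is packaging: the paper argues directly with the measure of balls, invoking continuity-from-above/below for measures, the intermediate value theorem, and closedness plus compactness plus the extreme value theorem for minimality, whereas you reduce to the one-dimensional CDF $F$ of the scalar radius $\lVert \mathbf{g}_i - \mu\rVert_2$ and obtain minimality via $\delta^* = \inf\{\delta : F(\delta) \ge \tau\}$ together with two-sided continuity. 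This is a cleaner and slightly more elementary presentation of the identical argument (and, as you note, it makes Corollary~\ref{cor:all_metrics} immediate since the only property of $\ell_2$ used is that the ball boundaries are Lebesgue-null). One small omission you may wish to state explicitly: the left-limit step $\delta \uparrow \delta^*$ presupposes $\delta^* > 0$, which follows from $F(0) = 0 < \tau$, so the case distinction is vacuous but worth a sentence.
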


\begin{corollary}[All Metrics]
\label{cor:all_metrics}
In the same setup as Lemma~\ref{lem:min_threshold}, if instead of the $\ell_2$ metric, some other metric $d$ is used such that the boundary of the metric balls of $d$ have $0$ Lebesgue measure, then there exists a minimum threshold $\delta$ for $d$ centered at the mean of $\mathbf{g}_i$, such that for any $\delta' < \delta$, the TPR $< \tau$.
\end{corollary}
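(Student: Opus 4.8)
The plan is to collapse the statement to a one-dimensional fact about a single monotone function. Fix the checkpoint $W_i$ and write $\mu := \mathbb{E}[\mathbf{g}_i]$. By definition the true positive rate of the $\ell_2$-verifier with threshold $\delta$ centered at $\mu$ is $F(\delta) := \Pr[\,\|\mathbf{g}_i - \mu\|_2 \le \delta\,]$, i.e.\ the cumulative distribution function of the scalar random variable $R_i := \|\mathbf{g}_i - \mu\|_2$. The lemma is precisely the assertion that $\{\delta \ge 0 : F(\delta) = \tau\}$ has a least element, so I would isolate three properties of $F$ and derive everything from them: (i) $F$ is non-decreasing; (ii) $F$ is continuous on $[0,\infty)$; (iii) $F(0)=0$ and $F(\delta)\to 1$ as $\delta\to\infty$.

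Property (i) is just monotonicity of measure. For (iii), $F(0) = \Pr[\mathbf{g}_i = \mu] = 0$ since a singleton is Lebesgue-null and $\mathbf{g}_i$ is absolutely continuous, while $F(\delta)\to 1$ follows from $\bigcup_{\delta} B_2(\mu,\delta) = \mathbb{R}^n$ and continuity of measure from below. Property (ii) is where the absolute-continuity hypothesis does its work: right-continuity is continuity of measure from above applied to $\bigcap_k B_2(\mu,\delta+1/k) = B_2(\mu,\delta)$, and left-continuity reduces to showing the jump $F(\delta) - \lim_{\delta'\uparrow\delta} F(\delta') = \Pr[\,\|\mathbf{g}_i-\mu\|_2 = \delta\,]$ vanishes --- which it does because that sphere is a Lebesgue-null set and $\mathbf{g}_i$ places no mass on null sets. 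With (i)--(iii) in hand, I would set $\delta^\star := \inf\{\delta \ge 0 : F(\delta) \ge \tau\}$ (nonempty by the intermediate value theorem applied to the continuous $F$ when $\tau \in (0,1)$), observe that $F(\delta') < \tau$ for all $\delta' < \delta^\star$ by definition of the infimum, and then use continuity to squeeze $F(\delta^\star) = \tau$ from both sides: letting $\delta' \uparrow \delta^\star$ gives $F(\delta^\star)\le\tau$, and a sequence in $\{\delta : F(\delta)\ge\tau\}$ decreasing to $\delta^\star$ gives $F(\delta^\star)\ge\tau$. That is exactly Lemma~\ref{lem:min_threshold}.

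For Corollary~\ref{cor:all_metrics} I would simply audit which steps used $\ell_2$: only (ii), through the claim that the topological boundary of the metric ball, $\{x : d(x,\mu) = \delta\}$, is Lebesgue-null, and (iii), through the balls exhausting the space. The former is now exactly the corollary's hypothesis, and the latter holds for all $\ell_p$ norms (and for a bounded (pseudo)metric such as cosine similarity one simply reads ``$\delta\to\infty$'' as $\delta$ approaching the diameter). Hence the lemma's argument runs verbatim with $d$ in place of $\|\cdot-\mu\|_2$, and the corollary drops out with no extra work.

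The hard part is essentially just the continuity step (ii); the rest is bookkeeping. The point to be careful about is that absolute continuity is genuinely needed there: a radius distribution with an atom whose mass straddles the level $\tau$ would leave only $F(\delta^\star) \ge \tau$, with no threshold realizing TPR exactly $\tau$. I would also dispatch the degenerate regimes separately --- $\tau = 0$ is trivial ($\delta^\star = 0$), and for $\tau = 1$ a finite optimal threshold need not exist unless $\mathbf{g}_i$ has bounded support --- and double-check that ``boundary of the metric ball'' coincides with the level set $\{d(\cdot,\mu) = \delta\}$ for the metrics the paper cares about ($\ell_p$ and cosine similarity), which it does.
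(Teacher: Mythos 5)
Your proposal is correct and follows essentially the same route as the paper: both hinge on showing that $\delta \mapsto \mu_{\mathbf{g}_i}(B_\delta(\overline{g}_i))$ is continuous --- using absolute continuity of $\mathbf{g}_i$ together with Lebesgue-nullity of the metric-ball boundaries --- and then extracting a least threshold via a connectedness/intermediate-value argument plus a compactness/infimum step, after which the corollary is observed to use $\ell_2$ only through the boundary-nullity hypothesis. Your repackaging in terms of the one-dimensional CDF of $R_i = d(\mathbf{g}_i,\mu)$ and your explicit handling of the degenerate $\tau\in\{0,1\}$ cases are a cleaner exposition of the same argument, not a different one.
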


The idea is that one can interpolate between different thresholds to find the thresholds with the desired TPR. However, to do so we need to check continuity properties.

\begin{proof}
First we will use {\em continuity from above} (for measures) to show there is some $\delta$ s.t. the TPR $< \tau$. Note: as we are dealing with probability measures, \ie all sets have finite measures, we need not worry about the finiteness requirement for continuity from above. Recall that the measure is a continuous function from a connected domain, so by intermediate value theorem, there is some $\delta$ s.t. the TPR of $\tau$. Moreover, the pre-image of thresholds that give a TPR of $\tau$ is a closed set (by definition of being a continuous function). Thus, by taking the subset within $[0,\delta]$, we have a compact set. Hence, by extreme value theorem, there exists a minimum $\delta$ with TPR of $\tau$. 

Let $\overline{g}_i = \mathbb{E}(\mathbf{g}_i)$, and $B_{d,2^{n}}(\overline{g}_i)$ be the balls centered at $\overline{g}_i$ of radius $2^n$ in metric $d$ for $n \in \mathbb{Z}$. For some large $N$, note that $\cap_{n \in \mathbb{Z}, n \leq N} B_{d,2^{n}}(\overline{g}_i) = \overline{g}_i$ and so we have $\mu( \cap_{n \in \mathbb{Z},~n \leq N} B_{d,2^{n}}(\overline{g}_i)) = 0$ (a single point set and hence measure $0$). By absolute continuity, we then have $\mu_{\mathbf{g}_i}(\cap_{n \in \mathbb{Z}, n\leq N} B_{d,2^{n}}(\overline{g}_i)) = 0$. By continuity from above $\lim_{n \rightarrow -\infty} \mu_{\mathbf{g}_i}(B_{d,2^{n}}(\overline{g}_i)) = 0$. So, to conclude, we have $\forall \tau > 0$ there exists some $n$ s.t. $\mu_{\mathbf{g}_i}(B_{d,2^{n}}(\overline{g}_i)) < \tau$. 

By continuity from above and below (for measures) taken with respect to balls $B_{d,r}(\overline{g}_i)$ for $\mu_{\mathbf{g}_i}$, and noting that $\mu_{\mathbf{g}_i}(\overline{B_{d,r}}(\overline{g}_i)) = \mu_{\mathbf{g}_i}(B_{d,r}(\overline{g}_i))$ where $\overline{B_{d,r}}(\overline{g}_i)$ is the closure of the ball (which has measure $0$), we see that $\mu_{\mathbf{g}_i}(B_{d,r}(\overline{g}_i))$ is a continuous function from $[0,\infty] \rightarrow [0,\infty]$ (where the variable is the radius $r$ of the ball). Since it is a continuous function from a connected domain, there is some $\delta$ such that the TPR is $\tau$ (by intermediate value theorem). Note that $\tau \leq 1$, and we know the measure of the whole space is equal to 1; this gives the upper-bound for intermediate value theorem. Moreover, the pre-image of thresholds that give TPR $=\tau$ is a closed (and non-empty) set; taking the subset within $[0,\delta]$ we have a compact set, and hence by the extreme value theorem, there exists a minimum $\delta$ with TPR equal to $\tau$. This concludes the proof. 
\end{proof}

\noindent{\bf Discussion:} What Lemma~\ref{lem:min_threshold} and more generally Corollary~\ref{cor:all_metrics} shows is that for those metrics whose Lebesgue measure of the boundary of their metric balls is $0$, there \emph{exists} an optimal per-step verification threshold to obtain the desired TPR. The question now becomes: ``How do we instantiate the optimal step-wise verification strategies?'' As the first observation in this direction, we empirically demonstrate any constant threshold (\ie staying the same over all iterations and weights) can be loose in certain settings allowing spoofing (in Section~\ref{ssec:on_finding_cheap_pols}). So the question finally becomes: ``How could we leverage the hyper-parameters and weights to devise tighter thresholds?''

\subsubsection{On FPR Considerations}

To complement the discussion on optimal thresholds given constraints on TPR, we can similarly consider how constraints on the false positive rate (FPR) translate to restrictions on the verification threshold. We proceed as follows: 
we make an assumption about the adversaries that associates an absolutely continuous distribution of ``false,"
\ie spoofed gradients to each checkpoint $W_i$. Let us denote the random variable of these spoofed gradients as $\mathbf{\hat{g}}_i$, and the measure as $\mu_{\mathbf{\hat{g}}_i}$. Note that the FPR of a given verification mechanism is then $\mu_{\mathbf{\hat{g}}_i}(E)$, where $E$ is the set of accepted updates. 
Recall from Section~\ref{thudi_a}, 
we have $E = B_{d,\delta}(\bar{g_i})$, \ie the ball of radius $\delta$ centered at the mean honest update $\bar{g_i}$ using metric $d$ (satisfying the assumption of Corollary~\ref{cor:all_metrics}).

In practice, we want to upper-bound the FPR rate. That is, we want to pick a threshold $\delta$ that does not accept too many of the spoofed gradients, \ie $\mu_{\mathbf{\hat{g}}_i}(B_{d,\delta}(\bar{g_i})) \leq \lambda$. In this case, we can prove that, for a given $\lambda$, there is a maximum threshold $\delta$ such that $\mu_{\mathbf{\hat{g}}_i}(B_{d,\delta}(\bar{g_i})) \leq \lambda$.

\begin{lemma}[Maximum Threshold]
\label{lem:max_threshold}
Suppose metric $d$ satistfies the assumption of Corollary~\ref{cor:all_metrics}. Then, given $0 \leq \lambda <1$, there exists a maximum threshold $\delta$ for $d$ centered at the mean of the honest gradients $\mathbf{g}_i$ s.t. for any $\delta' > \delta$ the FPR $> \lambda$.
\end{lemma}

\begin{proof}
The approach is completely analogous to Lemma~\ref{lem:min_threshold} and Corollary~\ref{cor:all_metrics}. Inspecting the proof of the existence of a $\delta$ s.t. the ball has a desired measure, which only relied on having an absolutely continuous measure and the condition on the metric $d$, we immediately find that there exists at least one $\delta$ s.t. $\mu_{\mathbf{\hat{g}}_i}(B_{d,\delta}(\bar{g_i})) = \lambda$. 
Similarly, the proof of Lemma~\ref{lem:min_threshold} also showed the set of all thresholds giving the specified measure is closed, so we conclude the set of $\delta$ s.t. $\mu_{\mathbf{\hat{g}}_i}(B_{d,\delta}(\bar{g_i})) = \lambda$ is closed.

Note that this set is necessarily bounded for $\lambda <1$, as we have $\lim_{\delta \rightarrow \infty} \mu_{\mathbf{\hat{g}}_i}(B_{d,\delta}(\bar{g_i})) =1$, and so in particular there exists $\delta'$ s.t. for any $\delta > \delta'$ we have $\mu_{\mathbf{\hat{g}}_i}(B_{d,\delta}(\bar{g_i})) > \lambda$.
{Therefore, the set of thresholds that exactly give FPR $\lambda$ is non-empty, closed, and bounded. In particular, we have by the extreme value theorem that a maximum threshold $\delta$ exists. This concludes the proof.}
\end{proof}

\looseness=-1
In light of this result on thresholding when considering FPR, and the previous results on thresholding when considering TPR, one might ask what happens when we design verification methods with constraints on both. The main point is that if the upper-bound of Lemma~\ref{lem:max_threshold} is below the lower-bound of Lemma~\ref{lem:min_threshold}, then we cannot achieve both high TPR and low FPR. If, in fact, the upper-bound is higher than the lower-bound, then we can follow the lower-bound and achieve both high TPR and low FPR. In the following discussion we will focus on obtaining the optimal threshold for TPR with the understanding that this is sufficient (when high TPR and low FPR are both possible).

\subsubsection{On Constructing Optimal Step-wise Verification Strategies}
\label{thudi_b}

Constructing the optimal step-wise verification strategy, and especially proving its optimality, remains an {\em open problem} because the noise is always non-zero and dependent on the software and hardware used by the prover. However, notice that the training is successful despite the presence of this noise. This suggests that although optimality cannot be proven, it is possible to construct a characterization of noise that is compatible with the optimization objective of training. This characterization can then inform the design of thresholds for \pol verification mechanisms by serving as guidelines for the type of noise we should expect when creating and verifying proofs.

\begin{figure}[t]
\centering\hspace{-5mm}
\includegraphics[height=3cm]{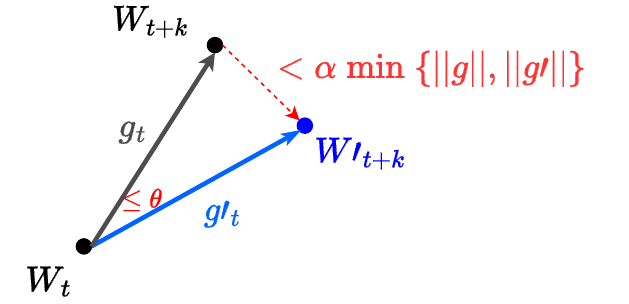}
\caption{ \textbf{Illustration for Lemma~\ref{lem:adaptive_threshold}.} Given the update from the proof, $g$, and verifier-reproduced update, $g'$, there exists some $\alpha$ such that the angle between $g$ and $g'$ is bounded by $\theta$ if $||g - g'|| < \alpha \cdot \min\{||g||,||g'||\}$.}
\label{fig:tight_delta_cone}
\end{figure}

Intuitively, noise altering the update's direction has the greatest potential to impact the convergence of a training run---this places a high (probability) prior for valid updates to be in certain directions. Assume that if the reproduced update pointed in the opposite direction of the original update. This would indicate that the reproduced update was actually the solution to the opposite objective (e.g., maximization instead of minimization). Thus, with high probability, the noise should have lower variance in the direction of the logged updates. However, this is not captured by the $\ell_2$ metric, which only compares the magnitudes. Thus, it would instead be better to inform the design of the verification mechanism with the necessary condition that noise must have lower variance {\em in the direction of training}. To this end, we can prove there exists a verification scheme that bounds both the error in the direction and magnitude of the  $\ell_2$ metric. This is given in Lemma~\ref{lem:adaptive_threshold}, and an illustration is shown in Figure~\ref{fig:tight_delta_cone}.

\begin{lemma}
\label{lem:adaptive_threshold}
$\forall \theta \in [0,2\pi)$, $\exists \alpha$ s.t. angle between $g$ and $g'$ is $\leq \theta$ if $||g - g'|| < \alpha \min\{||g||,||g'||\}$.
\end{lemma}
\begin{proof}
Let $\theta$ denote a desired bound on the difference in angles between $g$ and $g'$; this bound defines a convex conic set $S$ in $\mathbb{R}^n$, and at $\frac{g}{||g||}$ one can fit a ball with radius $\alpha$ in this set. By convexity, and by the origin being in the convex cone, it follows that by interpolating along the line to $0$, the $B_{\ell_2,s \cdot \alpha}(\frac{s \cdot g}{||g||})$ ball for all $s \geq 0$ is in $S$. Thus, by taking $s = ||g||$, if $||g' - g|| \leq \alpha \cdot ||g|| $ then $g' \in B_{\ell_2,s \cdot \alpha}(\frac{s \cdot g}{||g||}) \subset S$. That is, the angle between $g'$ and $g$ is less than $\theta$. 
\end{proof}

Lemma~\ref{lem:adaptive_threshold} informs how we can design step-wise verification mechanisms that can bound both the difference in angle and $\ell_2$ norm. To do so, one may set the verification threshold $\delta$ to be $\alpha \cdot \min\{||g||,||g'||\}$ instead of a constant. This analysis suggests a first step towards an improved step-wise verification strategy, and we show in Appendix~\ref{app:additional_figures} ({Figures}~\ref{fig:defense_intuition_adp_delta} and~\ref{fig:adaptive_delta_all}) how using $\alpha =1$ leads to an improved thresholding scheme that successfully thwarts all current and past attacks. However, the resulting verification mechanism is still not provably optimal. Indeed, just like the threshold on the $\ell_2$ norm of the difference, the threshold on the angle would also be impacted by implementation details like on what machine the DNN is trained, what library is used, etc. The following open questions remain.

\begin{tcolorbox}
\bf Open Questions: (a) How do we instantiate the optimal verification strategy (if we can)?; and (b) Can an adversary still bypass the optimal verification strategy?
\end{tcolorbox}

\subsection{On Efficient Verification}
\label{sssec:topq_theory}

We now turn to the efficiency perspective of proof verification and its impact on our ability to formally reason about the robustness of verification. To make verification more efficient, the verifier may either (a) only verify a subset of training steps; or (b) spend less cost on verifying individual steps (refer Section~\ref{subsec:primer}). \jia propose selecting the $Q$ updates with largest magnitude to make verification more efficient through (a). 
However, recall that proof correctness is defined on a {\em complete} sequence of model updates contained in the proofs. Thus, verifying only a subset will not guarantee that only valid proofs are accepted
{\em unless that subset is representative (see Definition~\ref{def:subset})}%
. A simple counter-example is if an adversary is able to create a {\em valid} subset of model updates which is part of a larger, {\em invalid} sequence, such that the subset passes verification with some non-trivial probability. In this case, the larger {\em invalid} sequence also passes verification. This requirement for representative subsets of updates raises the question of if an adversary can somehow control which updates are verified so as to create a {\em valid} subset of model updates from a larger, {\em invalid} sequence.

To obtain a robust verification mechanism that is more efficient than verifying the complete sequence of model updates, we need to show the existence of a subset of updates such that its validity implies the validity of the entire training sequence. We define such a {\em representative subset} (in Definition~\ref{def:subset}) which satisfies the desired property 
(see Lemma~\ref{lemma:subset}). The total cost of representative subset selection together with verifying the subset should be less than that of verifying the entire proof sequence. Note that by this definition, the set of all updates in a proof is also a representative subset of itself. It is assumed that there exists a representative subset that is smaller than the entire proof (formally stated in Assumption~\ref{assum:represent}).

\begin{definition}[Representative Subset]
\label{def:subset}
A subset $\mathbb{S}$ of model updates is representative of a training process $\{g_1, \cdots, g_T\}$, if there exists any update $g_i$ such that $||g_i - g'_i|| > \delta_i$ (\ie the $i^{th}$ update is invalid), then there exists at least one $g_j \in \mathbb{S}$ with $||g_j - g'_j|| > \delta_j$, where $i$ could be equal to $j$.
\end{definition}

\begin{lemma}
\label{lemma:subset}
Verifying $\mathbb{S}$ is equivalent to verifying all the training updates.
\end{lemma}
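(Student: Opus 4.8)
The plan is to unfold both sides of the claimed equivalence into elementary Boolean statements about the validity of individual updates, and then to observe that Definition~\ref{def:subset} is precisely the condition that forces these two statements to agree. Concretely, I would model the verification of any set of updates $\mathbb{A}$ (whether $\mathbb{A} = \mathbb{S}$ or $\mathbb{A} = \{g_1,\dots,g_T\}$) by the predicate $\mathrm{Valid}(\mathbb{A}) \equiv \bigwedge_{g_i \in \mathbb{A}} \big(\|g_i - g'_i\| \le \delta_i\big)$, i.e.\ the verifier outputs \texttt{ACCEPT} exactly when every update it inspects is within its threshold. The goal is then to show $\mathrm{Valid}(\mathbb{S}) \Leftrightarrow \mathrm{Valid}(\{g_1,\dots,g_T\})$, which is the statement that the two verification procedures return the same decision on every proof.

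First I would dispatch the ``only if'' direction, $\mathrm{Valid}(\{g_1,\dots,g_T\}) \Rightarrow \mathrm{Valid}(\mathbb{S})$: since $\mathbb{S} \subseteq \{g_1,\dots,g_T\}$, a conjunction ranging over the subset is implied by the conjunction ranging over the whole set, so if all updates pass then in particular those in $\mathbb{S}$ pass. Then I would prove the contrapositive of the ``if'' direction: assume $\mathrm{Valid}(\{g_1,\dots,g_T\})$ fails, so there is an index $i$ with $\|g_i - g'_i\| > \delta_i$. By the defining property of a representative subset (Definition~\ref{def:subset}), this guarantees some $g_j \in \mathbb{S}$ with $\|g_j - g'_j\| > \delta_j$, hence $\mathrm{Valid}(\mathbb{S})$ fails too. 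Combining the two directions gives the equivalence of the accept/reject outcomes, and therefore also of the induced acceptance probabilities taken over the randomness in the reproduced updates $g'_i$; so a verifier that checks only $\mathbb{S}$ accepts a proof if and only if a verifier that checks the full sequence does, with exactly the same success probability.

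I do not expect a genuine obstacle here: the lemma is essentially a reformulation of Definition~\ref{def:subset}, and the only point worth making explicit is that ``equivalence of verification'' must be read at the level of \texttt{ACCEPT}/\texttt{REJECT} decisions (equivalently, acceptance probabilities), not at the level of which particular updates are flagged as invalid. The substantive content lives not in this lemma but in Assumption~\ref{assum:represent}, namely the existence of a representative subset strictly smaller than the full proof; the lemma merely certifies that \emph{any} subset satisfying Definition~\ref{def:subset} is a sound basis for more efficient verification. I would therefore keep the proof short, and, space permitting, add a remark that the top-$Q$ heuristic of Jia~\etal is only a \emph{candidate} for such a subset and need not satisfy Definition~\ref{def:subset} in general --- which is exactly the gap exploited by the attack we develop in \S~\ref{sec:fundamentals_revisited}.
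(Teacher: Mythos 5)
Your proof is correct and takes essentially the same approach as the paper's: the non-trivial direction (validity of $\mathbb{S}$ implies validity of the full sequence) is handled via the contrapositive of Definition~\ref{def:subset}, and the other direction follows trivially from $\mathbb{S}$ being a subset. The paper's proof is stated a bit more informally but the two arguments are identical in substance.
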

\begin{proof}
When verifying $\mathbb{S}$, if all the updates pass, \ie $||g_j - g'_j|| < \delta_j$ $\forall g_j \in \mathbb{S}$, then by Definition~\ref{def:subset}, there does not exist any $g_i$ such that $||g_i - g'_i|| > \delta_i$ in the entire sequence of training updates. On the other hand, if at least one update in the representative subset did not pass, then the proof should be rejected as it captures another update in the entire sequence that will not pass verification.
\end{proof}

\begin{tcolorbox}
\begin{assumption}[Representativeness]
\label{assum:represent}
If a proof consists of $T$ training updates, then there exists a representative subset (see Definition~\ref{def:subset}) for it with a size less than $T$.
\end{assumption}
\end{tcolorbox}

\looseness=-1
Selecting a representative subset is, however, difficult. It is equivalent to finding model updates that (a) are necessary for achieving the final model state{;} or (b) would not exist if previous updates were not computed correctly. Researchers who study optimizers for DNNs have expended significant effort in studying similar questions, but it still remains an open problem. Nonetheless, we may leverage this definition to infer properties that an optimal selection mechanism (\ie one that is able to select the smallest representative subset) would possess. Based on Definition~\ref{def:subset}, the representative subset must contain individual updates that can be used to infer the validity of some other updates not in the subset. This means that the metric used to select such updates must be a function of multiple training updates and take into account the relationships between updates. Thus, we believe the selection mechanism for the top-$Q$ verification approach proposed by \jia does not satisfy Assumption~\ref{assum:represent}. It only considers the norm of individual training updates (in isolation). This results in concrete attacks---we instantiate one in Section~\ref{ssec:efficient_verification} showing how an adversary may create a subset of updates that seems to be valid, ergo forcing the verifier to perform verification on the valid updates but leaving the other invalid updates in the spoof untouched. Based on our findings, we frame the following open questions:

\begin{tcolorbox}
\bf Open Questions%
: (a) How to select a subset of training updates to represent a training process?; (b) What properties of the selected training updates make them more important than those not selected?; (c) How does this connect to optimization algorithms of DNNs?
\end{tcolorbox}

So far, we have provided a systematic assessment of the desiderata of proof verification to correctly and robustly play its role in verifying proofs. Violations of these desiderata may result in spoofs passing verification. In Section~\ref{sec:fundamentals_revisited}, we empirically validate our claims by introducing new attacks. Before this exposition, we first focus on understanding the power an adversary has for spoofing given our threat model of Section~\ref{subsec:threat_model}.

\section{On Understanding Cheap Spoofing}
\label{sec:theory_cheap_spoofing}

Up to this point, we have discussed the first role of the verification mechanism, namely, to efficiently verify that a sequence of model updates follows a valid gradient trajectory. However, adversaries may instead choose to target the second role of verification: establishing precedence. To do so, the adversary's aim is to find a valid proof at a cost lower than honestly training the model. Intuitively, this is possible since the adversary has knowledge of the parameters of the final model---knowledge that the honest trainer does not have.

This is where the second role of a proof---establishing precedence for the trained model---becomes important for characterizing \pol protocol robustness. We need to ensure that, given knowledge of the final weights $W_T$ obtained from honest training but not the rest of the proof of the original model owner, it is {\em impossible} to recreate any valid sequence (defined in Section~\ref{subsec:primer}) resulting in $W_T$ using a cheaper generation process than honest training. The efficacy of any proof generation process is measured through its ``cost,'' which is formally defined as follows:

\begin{definition}[Cost of a Proof]
We define cost as some function $C: A_{D,W_T} \rightarrow \mathbb{R}^{+}$, which represents the ``cost" associated with computing each proof. 
\label{def:cost}
\end{definition}

Proving impossibility of inexpensive proof creation naturally leads to the requirement for a {\em cheapness} assumption. We note that \jia require their proofs satisfy such a cheapness assumption in desideratum G.2 (Security). We state the assumption below: 

\vspace{2mm}

\begin{tcolorbox}
\begin{assumption}[Cheapness]
\label{assum:cheapness}
{Given a cost function $C$, for any algorithm $F: (D,W_T) \rightarrow A_{D,W_T}$, $\mathbb{E}[C(F(D,W_T))] \geq \mathbb{E}_{\mathcal{P} \sim A_{D,W_T}}[C(\mathcal{P})]$}
\end{assumption}
\end{tcolorbox}

Note that one studies the expected (instead of the actual) cost as the training process is assumed to be stochastic. In this section, we initiate a study of {\em when} this assumption holds. This is equivalent to understanding if an adversary can break the \pol protocol by creating a valid spoof with a lesser cost than honest training. \jia term such strategies as {\em stochastic spoofing}. We take a first step towards proving when stochastic spoofing {\em cannot} exist, \ie when the cheapness assumption cannot be violated. We do so by surfacing stability properties exhibited by stochastic spoofing adversaries but {\em not} by honest trainers. 

We begin by defining a class of algorithms that capture the goal of stochastic spoofing: algorithms that produce valid proofs with lower expected cost. This is formalized in Definition~\ref{def:cheap_pol}. Note that the non-existence of such algorithms is equivalent to the cheapness assumption, as formalized in Assumption~\ref{assum:cheapness}.

\begin{definition}[$c$-Cheap Proof Algorithm]
Consider $A_{D,W_T}$ as stated earlier in Section~\ref{subsec:primer}. Let $C$ be a cost function as defined in Definition~\ref{def:cost}, and $E = \mathbb{E}_{\mathcal{P} \sim A_{D,W_T}}[C(\mathcal{P})]$ be the expected cost over $A_{D,W_T}$ for some (honest) distribution given by (honest) probability measure $\mu$ on event space $A_{D,W_T}$. Then an algorithm $F: (D,W_T) \rightarrow A_{D,W_T}$ is $c$-cheap if $\mathbb{E}[C(F(D,W_T))]< cE, c \in [0,1)$.
\label{def:cheap_pol}
\end{definition}

This definition formalizes stochastic spoofing attacks $F$ as those whose expected cost is some fraction of the expected cost of honest training. The expectations are taken using some arbitrary probability measures for generality (\ie some measure representing honest training). 

{Having formally defined} spoofing attacks $F$, the main challenge in directly proving the (non-)existence of such spoofs is to identify whether access to a particular local minimum (\ie a trained model's weights) in a non-convex loss surface would enable stochastic spoofing. Without an answer to this question, we cannot say if spoofing attacks exist once the weights have been stolen.
Note that for convex optimization, where a {\em unique} global minimum exists, the adversary is guaranteed to reach the same minimum as the victim if the victim model is trained to convergence. However, this is usually not true under the setting of DNN training when the adversary only has access to the final state of a model (trained by an honest prover) but has no information about the rest of the proof: the loss landscape is often highly non-convex and has many local minima that can be attained through numerous valid paths~\cite{Choromaska2015TheLS}. 

However, instead of directly proving (non-)existence, we explore which properties $c$-cheap algorithms (for finding stochastic spoofs) must satisfy. This is a first step towards understanding when stochastic spoofing is possible and how one might introduce measures to prevent them.

\label{app:discussion_of_c_cheap}
\begin{lemma}[Stability of $c$-cheap algorithms]
\label{lem:stability}
Assuming $\mu$ represents the distribution from honestly training, a ``$b$-measure'' subset means honest training produces a proof in that set with likelihood $b$. A $c$-cheap algorithm, with probability $\frac{2}{3}$, only produces proofs in a ``$\leq \zeta$-measure'' subset of $A_{D,W_T}$ (using measure $\mu$), where $\zeta = \frac{Var(C(\mathcal{P}))}{(E(1-c) + a)^2}$ and $a = \sqrt{3Var(C(F(D,W_T))}$.
\end{lemma}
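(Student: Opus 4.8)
I would prove this by a ``two-sided Chebyshev'' argument: apply concentration once under the output distribution of the spoofing algorithm $F$ to pin down a low-cost sublevel set that captures most of $F$'s probability mass, and once under the honest measure $\mu$ to show that this sublevel set is $\mu$-rare. First I would unwind the hypotheses. Since $F$ is $c$-cheap, its expected output cost $m := \mathbb{E}[C(F(D,W_T))]$ satisfies $m < cE$, and $a = \sqrt{3\,\mathrm{Var}(C(F(D,W_T)))}$ by definition. Applying Chebyshev's inequality to the real random variable $C(F(D,W_T))$, with the mean and variance taken over the internal randomness of $F$, gives $\Pr\big[\,|C(F(D,W_T)) - m| \ge a\,\big] \le \mathrm{Var}(C(F(D,W_T)))/a^2 = 1/3$, so with probability at least $2/3$ we have $C(F(D,W_T)) \le m + a < cE + a$.

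Next I would define the (measurable, since $C$ is assumed to have finite second moment under $\mu$) sublevel set $S := \{\,\mathcal{P} \in A_{D,W_T} : C(\mathcal{P}) \le cE + a\,\}$. By the previous step, $F$ outputs a proof lying in $S$ with probability at least $2/3$, so $S$ is the promised ``$\le \zeta$-measure subset'' and it remains only to bound $\mu(S)$. Under $\mu$ the variable $C(\mathcal{P})$ has mean $E$ and variance $\mathrm{Var}(C(\mathcal{P}))$, and $\{C(\mathcal{P}) \le cE + a\}$ is a \emph{lower}-tail event precisely when $cE + a < E$, \ie when $a < E(1-c)$ --- exactly the regime in which the conclusion is non-vacuous, and which says that $F$'s cost concentrates tightly compared with the cheapness gap $E - m \ge E(1-c)$. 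In that regime $S \subseteq \{\,|C(\mathcal{P}) - E| \ge E(1-c) - a\,\}$, so a second application of Chebyshev, now under $\mu$, yields $\mu(S) \le \mathrm{Var}(C(\mathcal{P}))/\big(E(1-c) - a\big)^2$. Combining the two steps gives the claim, with $\zeta$ of the form stated in the lemma (up to the sign convention carried by the slack term $a$ in the denominator, which I would pin down when writing the details).

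I do not expect a genuine obstacle here --- the content is two invocations of Chebyshev --- so the place to be careful is bookkeeping: keeping straight that the first expectation/variance is over $F$'s randomness whereas the second is over the honest measure $\mu$, and choosing the threshold in Step 1 to be the smallest one (namely $m + a$, or the looser $cE + a$) that still retains probability $2/3$, since that is what makes $\zeta$ as small as possible. The implicit hypothesis worth surfacing is $a < E(1-c)$: if the spoofer's output cost has variance comparable to $E^2$, then $\zeta$ can exceed $1$ and the lemma is vacuous, so the genuine content is that \emph{cheap and low-variance} spoofing algorithms are forced to concentrate their output on a set that honest training almost never visits --- the ``instability'' signature the section is after. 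The only regularity needed is measurability of $C$ with finite second moment under both $\mu$ and the output law of $F$, which is already implicit in the two variances appearing in the statement.
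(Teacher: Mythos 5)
Your approach is exactly the paper's: one concentration step under the output law of $F$ (via Chebyshev with $a = \sqrt{3\,\mathrm{Var}(C(F))}$), then a second concentration step under $\mu$ to bound the honest measure of the resulting low-cost sublevel set. Where you diverge is the sign of $a$ in the denominator, and here you are in fact the one who is correct. The paper's proof writes
\[
\mathbb{P}\bigl(C(\mathcal{P}) < E_1 + a\bigr) \;\le\; \mathbb{P}\bigl(|C(\mathcal{P}) - E|^2 \ge (E - E_1 + a)^2\bigr),
\]
but this containment is false: if $C(\mathcal{P}) < E_1 + a$ then one only gets $E - C(\mathcal{P}) > E - E_1 - a$, so the tail event that contains the sublevel set is $\{|C(\mathcal{P}) - E| > E - E_1 - a\}$, not $\{|C(\mathcal{P}) - E| > E - E_1 + a\}$. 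Pushing this through with $E_1 < cE$ gives your
\[
\mu(S) \;\le\; \frac{\mathrm{Var}(C(\mathcal{P}))}{\bigl(E(1-c) - a\bigr)^2},
\]
whereas the paper's stated $\zeta$ with $+a$ is a strictly smaller (hence unproven) quantity arising from that sign slip. You also correctly surface the implicit regime hypothesis $a < E(1-c)$, which is needed for the sublevel set to be a genuine lower-tail event under $\mu$; the paper never states this, and its $+a$ form obscures the fact that the bound becomes vacuous once the spoofing algorithm's cost variance is large relative to the cheapness gap. The one cosmetic loss in your write-up is using the looser threshold $cE + a$ rather than $E_1 + a$ when defining $S$, which costs nothing since both routes collapse to the same final bound after applying $E_1 < cE$, and the tighter intermediate bound $\mathrm{Var}(C(\mathcal{P}))/(E - E_1 - a)^2$ could be retained if desired.
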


\begin{proof}
First by Markov's inequality we have $\mathbb{P}(|\mathbb{E}(C(F)) - C(F)|^2 > a^2) \leq \frac{Var(C(F))}{a^2}$, and so taking $a = \sqrt{3 Var(C(F))}$ we arrive at this upper-bound that is less than $1/3$, thus with probability $2/3$, $|\mathbb{E}(C(F)) - C(F)| < a$. 

Now the question is how many proofs are in that $a$ ball around $F$'s mean $E_1 = \mathbb{E}(C(F))$. This is given by $\mathbb{P}(C(\mathcal{P}) > E_1 - a) \mathbb{P}(C(\mathcal{P}) < E_1 + a) \leq \mathbb{P}(C(\mathcal{P}) < E_1 + a) \leq \mathbb{P}(|C(\mathcal{P}) - E|^2 \geq (E - E_1 + a)^2) \leq \frac{Var(C(\mathcal{P}))}{(E_0 - E_1 + a)^2} \leq \frac{Var(C(\mathcal{P}))}{(E_0(1-c) + a)^2}$ where the second last inequality was just Markov's inequality.
\end{proof}

Lemma~\ref{lem:stability} relates the likelihood of a stochastic spoofing attack $F$ producing a set of proofs to the likelihood honest training would produce those proofs. Particularly, when $\frac{Var(C(x))}{(E(1-c) + a)^2} < \frac{2}{3}$, Lemma~\ref{lem:stability} states that the likelihood of a set of proofs the adversary's algorithm $F$ generates is less than the likelihood that honest training produces those proofs. That is, a certain set becomes more common, or {\em more stable} with the adversary's algorithm. One could potentially use this information as an additional verification step to detect and reject such ({spoofing}) algorithms.

Complementing the enhanced stability properties of $F$, we also have a lower bound on the query complexity needed to obtain $c$-cheap proofs with honest training. Let us define $F_{honest}$ as the algorithm sampling/querying from $A_{D,W_T}$ with honest measure $\mu$ until it obtains a $c$-cheap proof. The following lemma gives a lower bound on how many queries $F_{honest}$ needs to achieve this. This is a potentially important property to set a baseline cost any prover must require to prevent stochastic spoofing.

\begin{lemma}[Queries]
\label{lem:queries}
Let $F_{honest}$ query $A_{D,W_T}$ (with probability measure $\mu$) inducing a distribution on $C(\mathcal{P}), \mathcal{P}\in A_{D,W_T}$. Then with probability $\frac{2}{3}$, $F_{honest}$ issues $\geq \frac{\log(1/3)}{\log(1-P)}$ queries where $P = \frac{Var_{\mu}(C(\mathcal{P}))}{(1-c)^2E^2}$ to obtain a $c$-cheap proof.
\end{lemma}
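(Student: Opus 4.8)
The plan is to treat $F_{honest}$ as a sequence of i.i.d. Bernoulli trials: each query samples $\mathcal{P} \sim A_{D,W_T}$ under $\mu$, and the trial ``succeeds'' iff the sampled proof is $c$-cheap, i.e. iff $C(\mathcal{P}) < cE$. The first step is to lower-bound the failure probability $1-p$ of a single trial, where $p = \mu(\{\mathcal{P} : C(\mathcal{P}) < cE\})$. Equivalently I would upper-bound $p$ itself. Since $cE < E = \mathbb{E}_\mu[C(\mathcal{P})]$, the event $C(\mathcal{P}) < cE$ forces $|C(\mathcal{P}) - E| > (1-c)E$, so by Chebyshev's inequality (exactly the Markov-on-squares step used in the proof of Lemma~\ref{lem:stability}) we get $p \leq \mathbb{P}(|C(\mathcal{P})-E|^2 \geq (1-c)^2 E^2) \leq \frac{Var_\mu(C(\mathcal{P}))}{(1-c)^2 E^2} = P$. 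This is where the quantity $P$ in the statement comes from.

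The second step is the geometric-distribution bound. If $F_{honest}$ issues $q$ queries, the probability that none of them produced a $c$-cheap proof is $(1-p)^q \geq (1-P)^q$ (monotonicity, using $p \le P \le 1$). I want to say that with probability at least $\frac{2}{3}$ the algorithm has not yet stopped after fewer than $\frac{\log(1/3)}{\log(1-P)}$ queries; that is, I want $q$ small enough that $(1-p)^q \geq \frac{1}{3}$, hence the process is still running. Taking logarithms (note $\log(1-P) < 0$, which flips the inequality), $(1-P)^q \geq \tfrac13$ holds as long as $q \leq \frac{\log(1/3)}{\log(1-P)}$. So for any such $q$, with probability $\geq (1-p)^q \geq (1-P)^q \geq \frac13$ — and here I would be slightly more careful: the clean ``$\ge \tfrac23$'' in the statement should be read as: the number of queries needed is at least $\frac{\log(1/3)}{\log(1-P)}$ with probability $\geq \tfrac23$, which follows because the stopping time is geometric with parameter $p \le P$, so $\mathbb{P}(\text{stop after} \le q) = 1-(1-p)^q \le 1 - (1-P)^q$, and choosing $q = \lfloor \frac{\log(1/3)}{\log(1-P)} \rfloor$ makes $(1-P)^q$ close to $\tfrac13$; replacing the threshold constant $\tfrac13$ inside the $\log$ by the constant needed to get exactly $\tfrac23$ tail mass is the routine bookkeeping I would not grind through here. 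The essential content — Chebyshev to bound the single-query success probability by $P$, then a geometric-tail estimate — is complete.

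The main obstacle, and the only subtle point, is the interface between the ``with probability $\tfrac23$'' in the conclusion and the constant $\log(1/3)$ appearing in the bound: these must be made consistent (a geometric tail of mass $\tfrac23$ corresponds to surviving until $(1-P)^q \approx \tfrac13$, so the two $\tfrac13$'s are the same constant, and the statement is internally consistent once one reads ``issues $\geq N$ queries with probability $\tfrac23$'' as ``has not stopped before step $N$ with probability $\tfrac23$''). A secondary technical caveat is that the Chebyshev step requires $Var_\mu(C(\mathcal{P})) < \infty$ and $E > 0$; both are implicitly assumed (the former is already used in Lemma~\ref{lem:stability}, and $E>0$ since $C$ maps into $\mathbb{R}^+$ and a $c$-cheap proof with $c \in [0,1)$ can only be meaningfully defined when $E>0$). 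No deep machinery is needed beyond what the paper has already deployed; the lemma is essentially a corollary of the same Chebyshev estimate that powers Lemma~\ref{lem:stability}.
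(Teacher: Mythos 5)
Your proposal takes essentially the same route as the paper: bound the per-query success probability $\mathbb{P}(C(\mathcal{P}) \leq cE)$ by $P$ using the Markov-on-squares (Chebyshev) step, then treat the number of queries as a geometric stopping time to obtain the $\log(1/3)/\log(1-P)$ bound. The interpretive wrinkle you flag about reconciling the $\tfrac{2}{3}$ with the $\log(1/3)$ is real but does not require a different constant inside the logarithm: the paper reads the conclusion as ``at least $N$ queries are necessary for the cumulative success probability to reach $\tfrac{2}{3}$,'' and under that reading the monotonicity $p \leq P$ gives exactly $(1-P)^N \leq \tfrac{1}{3}$ as the threshold, matching the stated bound without further bookkeeping.
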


\begin{proof}
We drop the $\mu$ subscripts
{but note this is}
the measure for cost distribution. We have $\mathbb{P}(C(\mathcal{P}) \leq cE) \leq \mathbb{P}(|E-C(\mathcal{P})|^2 \geq (1-c)^2E^2) \leq \frac{Var(C(\mathcal{P}))}{(1-c)^2E^2} \coloneqq P$, where the last inequality was by Markov's inequality. Thus $\mathbb{P}(C(\mathcal{P}) > cE) \geq 1 -P$.

{We} are interested in $N$ s.t{.} $(1-P)^N \leq 1/3$ so that with probability $2/3$ we obtain a $c$-cheap proof after $N$ queries if $\mathbb{P}(C(\mathcal{P}) > cE) = 1 -P$; this establishes a lower-bound as in general $\mathbb{P}(C(\mathcal{P}) > cE) \geq 1 -P$. This is simply given by $N = \frac{\log(1/3)}{\log(1-P)}$ concluding the proof.
\end{proof}

The previous two lemmas have identified conditions that $c$-cheap algorithms would need to satisfy. Whether an algorithm can satisfy these properties is an open problem, and a negative answer would also prove that $c$-cheap algorithms do not exist. However, stability and query complexity are only two properties that stochastic spoofing adversaries must satisfy, \ie they are necessary but may not be sufficient. 
It is worth noting that obtaining the identical model $W_T$ via training is equivalent to knowledge transfer without any loss of information, whereas most existing knowledge transfer algorithms only preserve the models' behavior on task data distribution~\cite{hinton2015distilling}. Thus $c$-cheap algorithms may be considered as a type of special knowledge-transfer algorithm.
In general, it remains 
an open problem whether algorithms given by Definition~\ref{def:cheap_pol} can or cannot exist (under what update rules, verification schemes, etc.) for DNN training. This will dictate whether (or when) comprehensive defense strategies against spoofing adversaries exist.  

\begin{tcolorbox}
\bf Open Question(s): Definition~\ref{def:cheap_pol} formally defines stochastic spoofing as algorithms that can create valid proofs with a lesser cost than honest training (given knowledge of the minimum from another training run). Do such algorithms exist, and under what conditions will they exist (or not)? 
\end{tcolorbox}

As a first step to better understand this open problem, we will introduce and evaluate several examples of candidate (stochastic spoofing) algorithms in Section~\ref{ssec:stochastic_spoofing_experiment}. This will enable us to evaluate their performance/computational costs empirically.

\section{{Empirical Evaluation of Efficient Verification of Valid Proofs}}
\label{sec:fundamentals_revisited}

In this section, our goal is to empirically explore the theoretical claims made on the robustness and efficiency of proof verification (see  Section~\ref{sec:giving_guarantees_with_pol}), specifically, the reproducibility and representativeness assumptions. Building on the findings in Section~\ref{ssec:good_training}, where we identified that imprecise tolerance to noise is a vulnerability in proof verification, we introduce a novel attack that outperforms prior attacks targeting this vulnerability (see Section~\ref{subsec:spoofs}). Additionally, we propose an attack against the top-$Q$ selection mechanism showing that it indeed does not identify representative subsets. Both attacks belong to the category of structurally correct spoofs, which means the adversary creates an invalid proof but the proof passes the verification by either targeting the (non-optimal) choice of noise tolerance or the representative subset.

\vspace{2mm}
\noindent{\bf Experimental Setup:} For the following experiments, we use the same setup from \jia: we evaluate all following experiments using CIFAR-10 and CIFAR-100~\cite{Krizhevsky09learningmultiple} datasets. To ensure a fair comparison with prior work~\cite{jia2021proof}, we used ResNet-20 and ResNet-50~\cite{he2015resnet} as the model architecture for the two tasks{,} respectively, and trained the models with a batch size of 128 for 200 epochs. Unless specified, all experiments are repeated 5 times and the figures include the confidence interval.

\subsection{On the Reproducibility in \pol} 
\label{ssec:on_finding_cheap_pols}

Recall that in Section~\ref{ssec:good_training}, we analyzed one of the fundamental assumptions for the \pol protocol: reproducibility of gradient updates. We proved the existence of optimal per-step verification thresholds, but due to the noise encountered while performing the computations, it remains an open problem as to how to construct the optimal verification strategy. Prior work by \zhang , discussed in Section~\ref{subsec:spoofs}, has implicitly exploited the vulnerability of static thresholds in verification to create spoofs (\ie they did not point out this vulnerability).

However, their attack is computationally costly, requiring at least $(43 \cdot n_{\text{iter}} + 1) \cdot k$ forward passes (FPs) for every update in the proof, where $n_{\text{iter}}$ is the number of iterations for optimizing the adversarial example (more details are in Appendix~\ref{app:cost}). Furthermore, it is not guaranteed to converge, so \zhang had to assume that the adversary is able to manipulate \pol hyperparameters such as the checkpointing interval (see Section~\ref{ssec:limitations_prior_spoofing}).\footnote{We reproduced their results and found it does not work
{when the checkpointing interval ($k$) differs from the ones picked in \cite{zhang2021adversarial}} (\eg $k=10$ for CIFAR-100).} In this subsection, we will introduce a new attack that targets the same vulnerability: the {\em infinitesimal update attack}. Our new attack is more efficient and is guaranteed to succeed without assuming control of the \pol hyperparameters. 

\begin{figure}[t]
\centering\hspace{-5mm}
\subfloat[Valid Update]{\includegraphics[height=3cm]{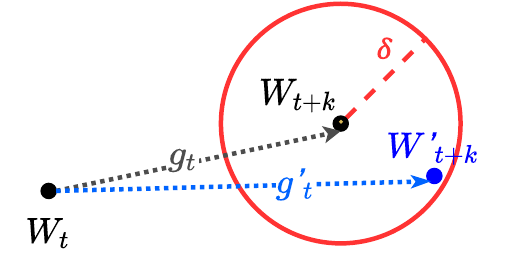}}
\vspace{5mm}

\subfloat[Spoofed Update]{\hspace{26mm}\includegraphics[height=3cm]{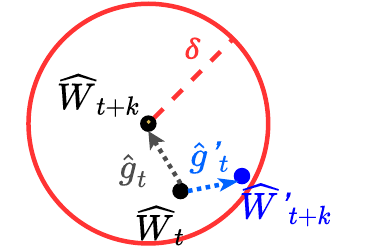}}
\caption{ {\bf Illustration of valid and spoofed updates in the infinitesimal update attack: } Usually, the proof update, $g_t$, and the verifier-reproduced update, $g'_t$, would both be much greater than $\delta$. The discrepancy between $g_t$ and $g'_t$ should solely come from hardware/software induced noise. However, in the infinitesimal update attack, the update from the spoof, $\hat{g}_t$, and the verifier-reproduced update, $\hat{g}'_t$, are both much smaller than $\delta$, which causes their difference to also be smaller than $\delta$. Hence, the spoof passes the verification protocol proposed by \jia. This  illustrates how an adversary can exploit the fixed $\delta$ to create an invalid \pol that passes verification.}
\label{fig:attack}
\end{figure}

\vspace{2mm}
\noindent\textbf{Infinitesimal update attack.} Let us re-establish{our notation. Let $g_t = W_{t+k} - W_{t}$ be the update in the honest proof, and $\hat{g}_t = \hat{W}_{t+k} - \hat{W}_{t}$ the update in the adversary's spoof. Let $g'_t = W'_{t+k} - W_{t}$ and $\hat{g}'_t = \hat{W}'_{t+k} - \hat{W}_{t}$ denote the corresponding reproduced updates by the verifier.

At a high level, the static threshold vulnerability can be exploited by small-magnitude update, as shown in Figure~\ref{fig:attack}. To demonstrate this, we propose a strategy that (a) requires near-zero computational cost; (b) is guaranteed to yield updates of near-zero magnitude; and (c) is hard for the verifier to detect. The idea is simple: to obtain an update of near-zero magnitude, one can either generate a near-zero gradient, or one can use an {\em infinitesimal learning rate}. Formally, the strategy is as follows:

\begin{enumerate}
\item Generate model weights $\hat{W}_k$, $\hat{W}_{2k}$ between $\hat{W}_0$ and the victim model $W_T$ such that, $d(\hat{W}_{i \cdot k}, \hat{W}_{(i+1)\cdot k}) \ll \delta$. As an exemplar approach, we utilize linear interpolation to achieve this. 
\item The learning rate $\eta$ is set to a small value (\ie $\eta \rightarrow 0$) such that the update is always smaller than $\frac{\delta}{k}$ irrespective of the gradient value.
\item All other information logged, including the data, can be random values. 
\end{enumerate}

It is clear that the cost of infinitesimal update attack is low since no training is required. The only cost comes from linear interpolation in step 1, which requires one floating point operation for every model parameter per model update. This amount is upper-bounded by the computation needed for 1 FP, and is much less than the aforementioned cost of Attack 2 by \zhang. Moreover, this attack is hard for the verifier (with the static threshold) to detect as it is always possible to create a much smaller update compared to $\delta$. If the verifier decreases $\delta$ to detect such updates,  valid updates may also be discarded, resulting in a high false negative rate. Note that detecting linear interpolation is not sufficient either because it is not the only way to generate models:  any strategy such that $d(\hat{W}_{i \cdot k}, \hat{W}_{(i+1)\cdot k}) \ll \delta$ $\forall i$ will be similarly effective. 

\begin{figure}[t]
\centering
\subfloat[{Infinitesimal} update attack (CIFAR-10)
\label{subfig:infinitesimal}]
{
\includegraphics[width=0.8\linewidth]{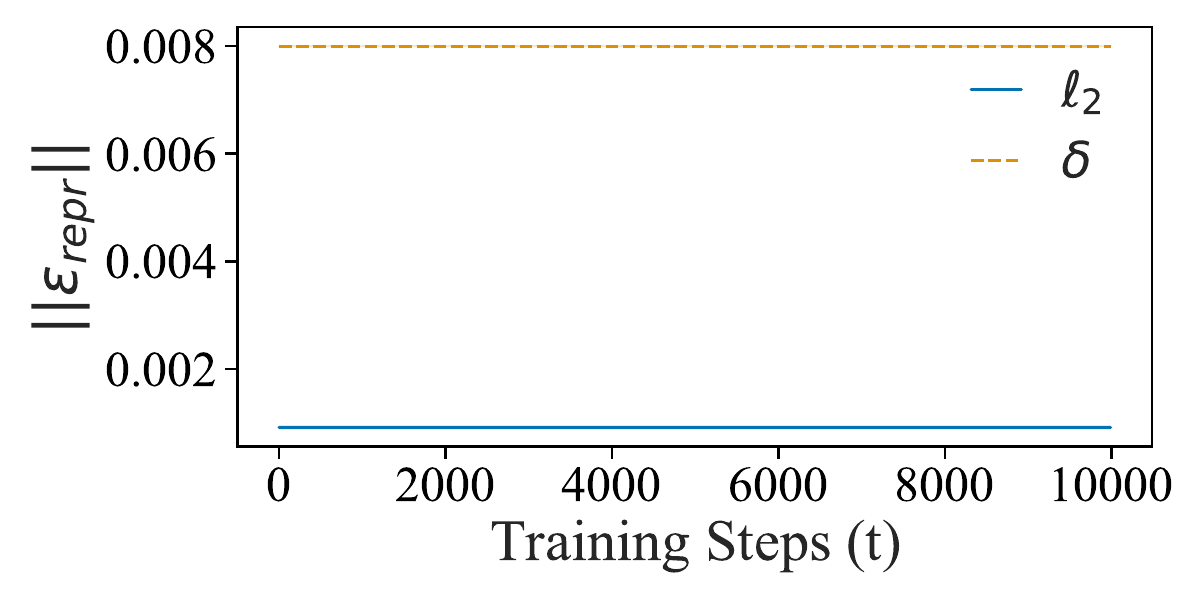}
}
\\
\subfloat[Attack 2 by Zhang ~\etal~\cite{zhang2021adversarial} (CIFAR-10)
\label{subfig:zhang}]
{
\includegraphics[width=0.8\linewidth]{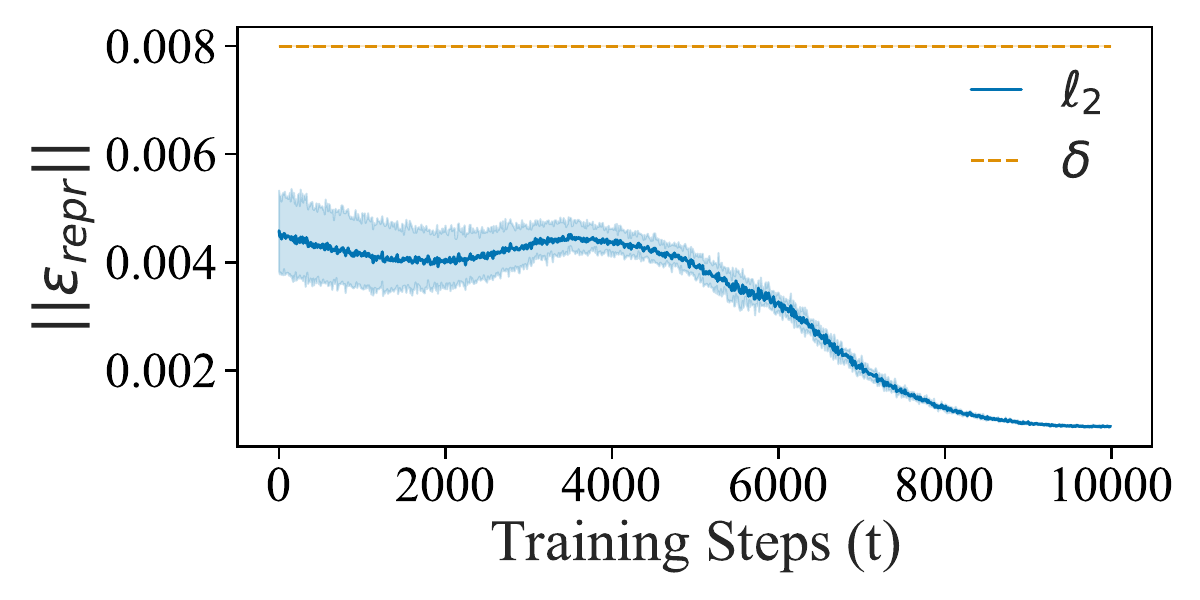}
}
\\ %
\subfloat[Infinitesimal update attack (CIFAR-100) \label{subfig:infinitesimal_cifar100}]
{\includegraphics[width=0.8\linewidth]{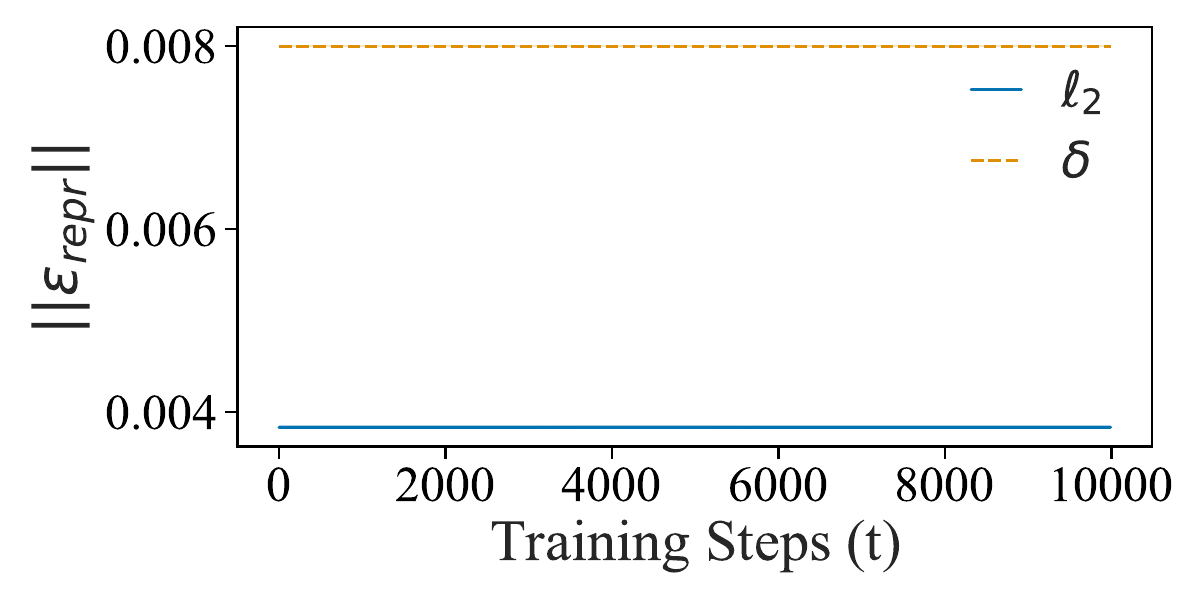}
}
\\
\subfloat[Attack 2 by \zhang (CIFAR-100)\label{subfig:zhang_cifar100}]{
\includegraphics[width=0.8\linewidth]{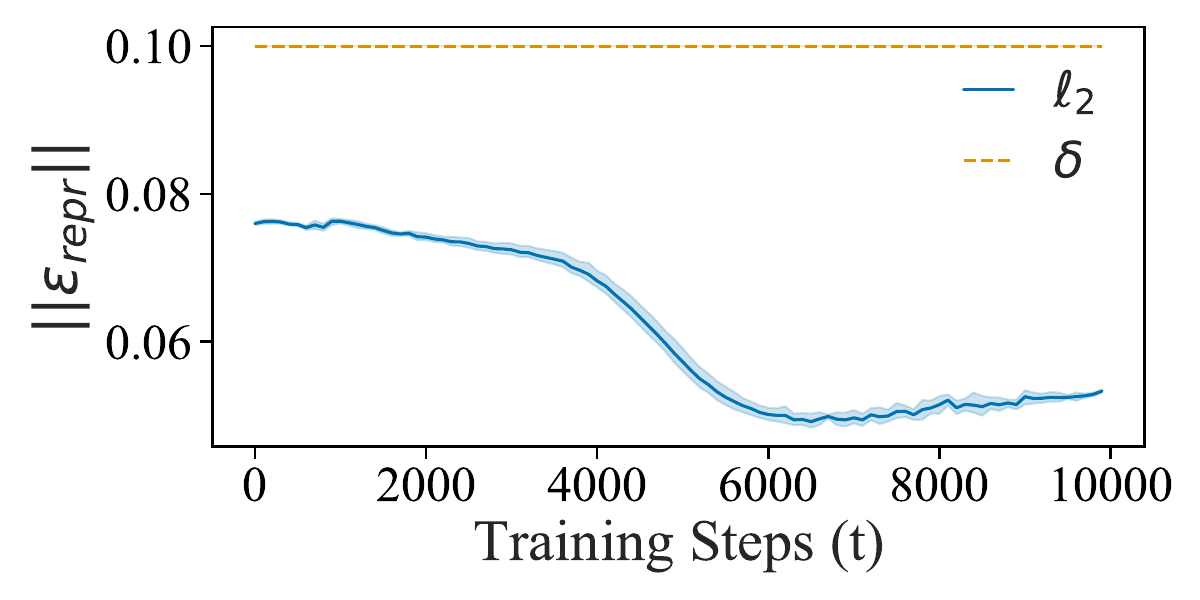}
}
\caption{{\textbf{The infinitesimal update attack and Attack 2 by \zhang can bypass verification by exploiting the imprecise $\delta$ bound. }$\delta$ is set to 0.008 manually in these experiments. It is observed that \nre is much smaller than $\delta$ for all steps, indicating that a static $\delta$ is not ideal for verification. We further observe that in addition to utilizing lesser computational resources, infinitesimal update generates consistently lower \nre than Attack 2.}}
\label{fig:baseline}
\end{figure}

\vspace{2mm}
\noindent{\em Evaluation \& Results:} We evaluated  our infinitesimal updates strategy against the original \pol framework with the same experimental setup (described at the beginning of this section) and the same parameters as \jia. We assume the most powerful verifier: verifying {\em all updates}. 

It can be seen in Figure~\ref{subfig:infinitesimal} and~\ref{subfig:infinitesimal_cifar100} that our proposed strategy is able to achieve a normalized reproduction error \nre significantly smaller than $\delta$, and can thus pass verification. We re-implement and reproduce Attack 2 by \zhang (refer to Figure~\ref{subfig:zhang} and~\ref{subfig:zhang_cifar100}) as a baseline to understand the effectiveness of our approach. It is observed that our attack always outperforms that of \zhang. This observation is true even when we later utilized code from the repository of \zhang.

\subsection{On the Representativeness in \pol} 
\label{ssec:efficient_verification}

As discussed earlier, an efficient verification mechanism that fails to select a representative subset of training updates may jeopardize the role of validity detection played by proof verification. This is especially true when the selection mechanism does not meet certain properties, \ie failing to include at least one invalid update in the representative subset when invalid updates exist in the proof, as discussed in Section~\ref{sssec:topq_theory}. Here we introduce an attack against the top-$Q$ verification mechanism to illustrate how an adversary can exploit this. By doing so, we demonstrate the necessity of the aforementioned properties for correct and efficient verification.

The selection mechanism for top-$Q$ updates makes verification more efficient by 
reducing the number of updates that need to be verified. This opens an attack surface for an adversary who now only needs to ensure that (a) it controls the selected updates; and (b) they pass verification. At a high level, one possible strategy is for the adversary to manipulate the magnitude of updates (\eg by using a large learning rate $\eta$) to control which updates are verified. More formally, the {\em blindfold top-$Q$} strategy we instantiate is as follows:
\begin{enumerate}
\item Generate the list of model checkpoints $\hat{W}_S$, $\hat{W}_{2S}$, $\cdots$ $\hat{W}_{T-S}$ by interpolating linearly and evenly between $\hat{W}_0$ and $W_T$, where $S$ is the number of updates per epoch.
\item For each epoch $i$, start with $\hat{W}_{i \cdot S}$, create $k \cdot Q$ valid updates by applying SGD with a large learning rate $\eta$. Store $\hat{W}_{i \cdot S+k}$, $\hat{W}_{i \cdot S+2k}$, $\cdots$ $\hat{W}_{i \cdot S+Q \cdot k}$. Record the magnitudes of these updates.
\item The remaining model states, \ie $\hat{W}_{i \cdot S+(Q+1) \cdot k}$ $\cdots$ $\hat{W}_{(i+1) \cdot S-k}$ can be created by linearly interpolating between $\hat{W}_{i \cdot S+Q \cdot k}$ and $\hat{W}_{(i+1) \cdot S}$, as long as these updates are smaller than any of the $Q$ valid updates computed in the previous step. This is similar to what is done in the infinitesimal update attack.
\end{enumerate}

To analyze the per-step cost of this attack, note there are two cases: the update can be either a top-$Q$ update or not. In the former case, besides linear interpolation, $k$ valid gradient updates
{need to be computed, where each costs 1 FP and 1 backward pass (BP), or approximately a total of 3 FPs for each update.} For the latter, only linear interpolation is needed. Therefore, the expected step-wise cost is $(Q/s) \cdot (3 \cdot k + 1) + (s-Q)/s = (3 \cdot k \cdot Q) / s + 1$ FPs, where $s$ is the number of steps per epoch.

Observe that the blindfold top-$Q$ strategy exists only because the verifier deployed a heuristic to decrease the computational cost of verification; such heuristics provide weaker (probabilistic) guarantees in comparison to verifying the entire proof when the updates selected for verification do not form a representative subset. The selection mechanism for top-$Q$ updates was proposed to provide a better trade-off than a strategy that involved randomly selecting updates. Alternative approaches that do not factor the representativeness assumption can suffer similar pitfalls; this allows the adversary to hide invalid updates among the ones that will be verified.

\vspace{2mm}
\noindent{\em Evaluation \& Results:} Following the same experimental setup as \jia, and taking $Q$ to be 5, we assumed the adversary obtained the chosen $Q$ and thus performed the blindfold top-$Q$ strategy by having 5 valid updates per epoch. As shown in Figure~\ref{fig:new_top_q_metric}, the adversary is always able to mislead the verifier into selecting these updates.

\section{{Empirical Evaluation of the Cheapness Assumption}}
\label{ssec:stochastic_spoofing_experiment}

\begin{figure}[t]
\centering
\subfloat[CIFAR-10]{\includegraphics[width=0.8\linewidth]{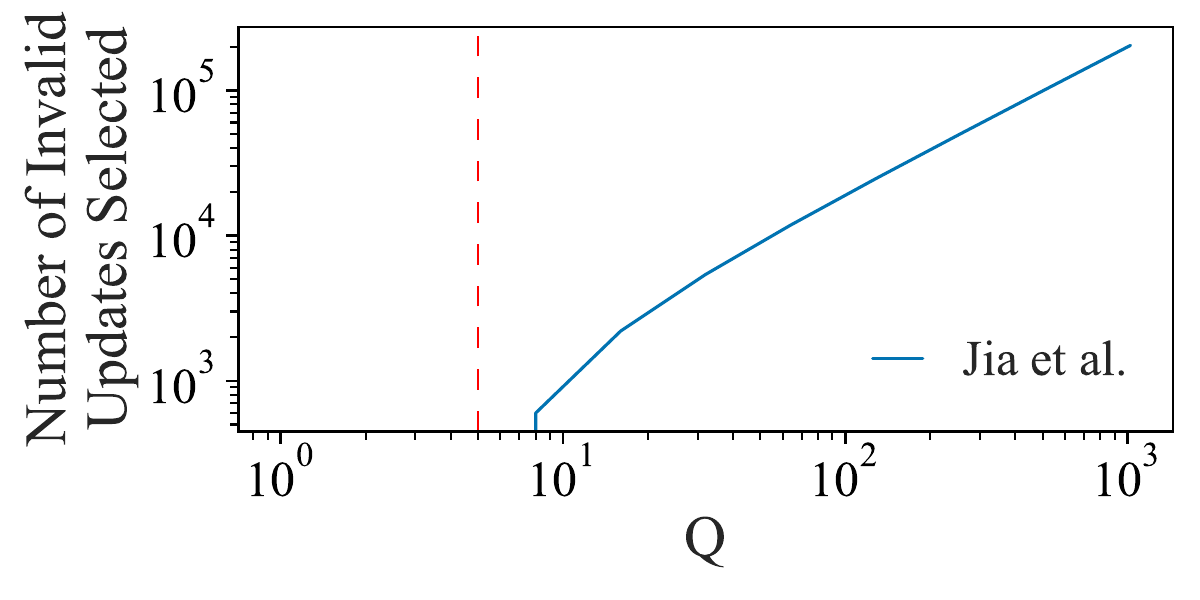}}

\subfloat[CIFAR-100]{\includegraphics[width=0.8\linewidth]{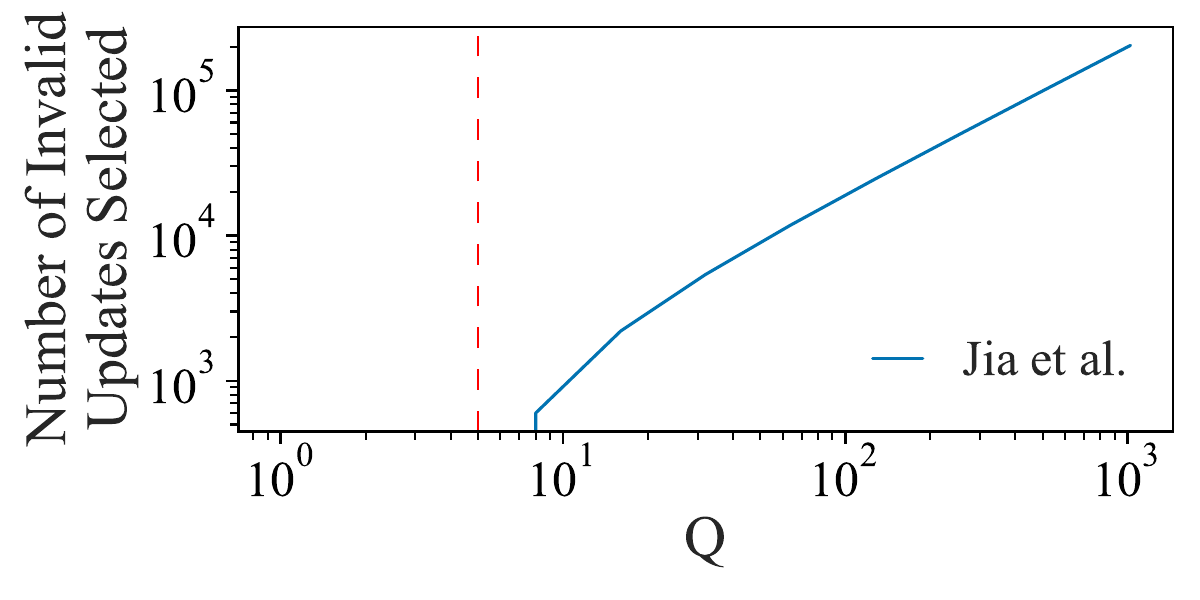}}
\caption{ \textbf{Evaluation of blindfold top-$Q$ attack} (see Section~\ref{ssec:efficient_verification}). We implement this attack for $Q=5$ as shown by the dashed red line. It is assumed that the adversary knows $Q$ thus they submit only $5$ large but valid updates that are constructed to pass verification. The rest of the proof is computationally cheap, and constructed to be invalid. Observe that the top-$Q$ method of \jia fails to detect the invalid updates (the blue curve is always 0 left of the red line) as long as the verifier does not use a larger $Q$ than what is previously claimed.}
\label{fig:new_top_q_metric}
\end{figure}

{The previous sections discussed how structurally correct spoofs violate} the first security assumption of the verification protocol by targeting the noise tolerance and the heuristics for efficient verification. In this section, we will explore attacks that target the second role of the verification protocol, which is to establish precedence. It should be computationally cheaper to obtain a proof from training than a spoof post hoc even when given access to the trained model's weights. Adversaries targeting this aspect of the \pol protocol are termed stochastic spoofing adversaries. In this section, we empirically analyze the difficulty of successfully constructing such a stochastic spoof. Although this analysis does not solve the problem of formally proving the (non-)existence of stochastic spoofing (see Section~\ref{sec:theory_cheap_spoofing}), it highlights that a better understanding of the convergence of optimizers on non-convex loss surfaces is required to instantiate stochastic spoofing adversaries. In other words, if stochastic spoofing attacks are to be more successful it would likely result from developments in learning theory that lead to the invention of new optimizers.

Note that in our experiments, we focus on DNNs because: (a) DNNs are more likely to be targeted by adversaries because they are computationally expensive to train compared to convex models; and (b) it is unknown whether the same local minimum in a DNN's non-convex loss surface can be recovered given knowledge of that particular minimum. Instead, for convex models, there exists a unique global minimum that an honest trainer achieves, and the adversary is guaranteed to achieve this exact same minimum.

\subsection{Why a Stochastic Spoofing Adversary Needs to Know the Final Model Weights} 

To understand the difficulty of recovering the same local minimum for DNNs through honest training (\ie \xspace {\em without} utilizing the knowledge of the final weights of another model), we empirically check if two nearly identical training setups can lead to the same weights. We kept the architecture, optimization algorithm, and training data the same but varied only the randomness in initialization and data sampling. We trained multiple models independently until convergence and compute the pairwise $\ell_2$ distance (of 30 data points we collected) of their weights, as shown in Table~\ref{tab:avg_dist}. We found with high consistency (low standard deviation) that this distance was large \ie on the order of the $\ell_2$ norm of the weights themselves ($55.481 \pm 0.110$ for ResNet-20 and $44.624 \pm 0.196$ for ResNet-50), and {\em not due to hardware noise} as it substantially exceeded \re. To determine if this is significant, we then performed a one-tailed $t$-test with null hypothesis that the distance between independent model parameters was zero. The $p$-values, summarized in Table~\ref{tab:avg_dist}, indicate that we could reject the null hypothesis with high confidence (low $p$-values). Given that such a minimal change could consistently result in significantly different weights, we argue that it is highly improbable to recover the same final weights under realistic scenarios (with even larger setup differences). Therefore, knowledge of the final model weights is essential to a stochastic spoofing adversary.

\begin{table}[t]
\centering
\normalsize
\begin{tabular}{l | c c }
\toprule
{\bf Setup} & $\ell_2$ {\bf distance} & {\bf $p$-value} \\ 
\midrule
\midrule
ResNet-20 & \multirow{ 2}{*}{$71.472 \pm 0.186$}   & \multirow{ 2}{*}{$8.65\times10^{-77}$}\\
CIFAR-10 & & \\
\hline
ResNet-50 & \multirow{ 2}{*}{$58.056 \pm 0.205$} & \multirow{ 2}{*}{$5.43\times10^{-73}$}\\
CIFAR-100 &  & \\
\bottomrule
\end{tabular}
\caption{ \textbf{Average distance of converged models for independent training runs on the same architecture.} A one-tailed $t$-test is used to test the null hypothesis: the models have distances of 0, and we report the $p$-values.}
\label{tab:avg_dist}
\end{table} 

\begin{figure}[t]
\centering
\subfloat[Epoch 0 (CIFAR-10)]{\includegraphics[width=0.83\linewidth]{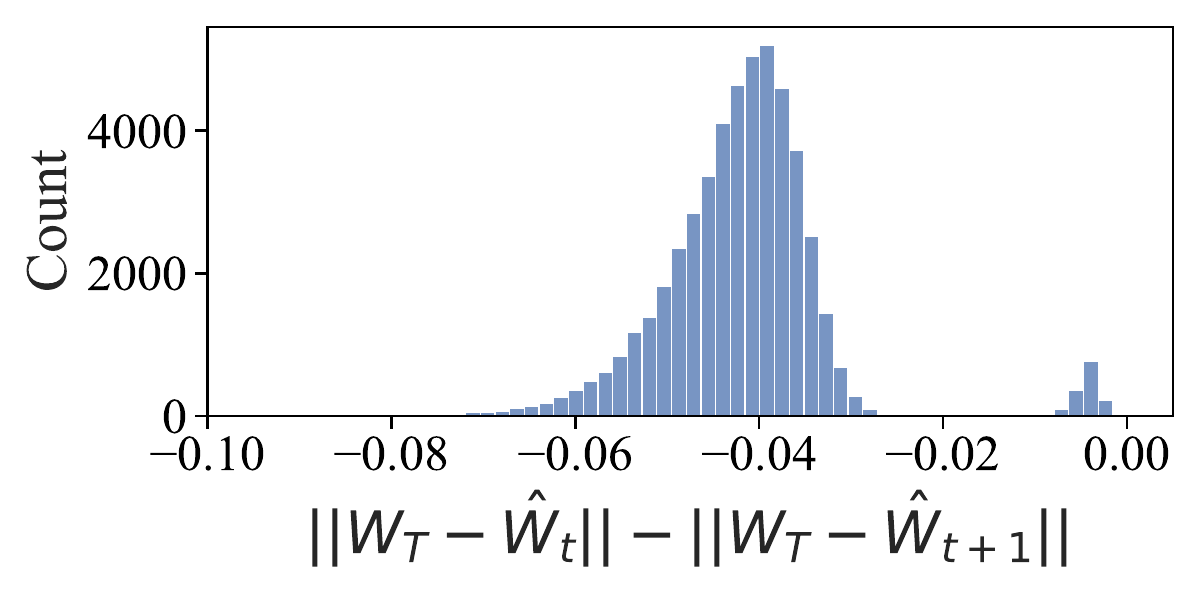}}

\subfloat[Epoch 200 (CIFAR-10)]{\includegraphics[width=0.83\linewidth]{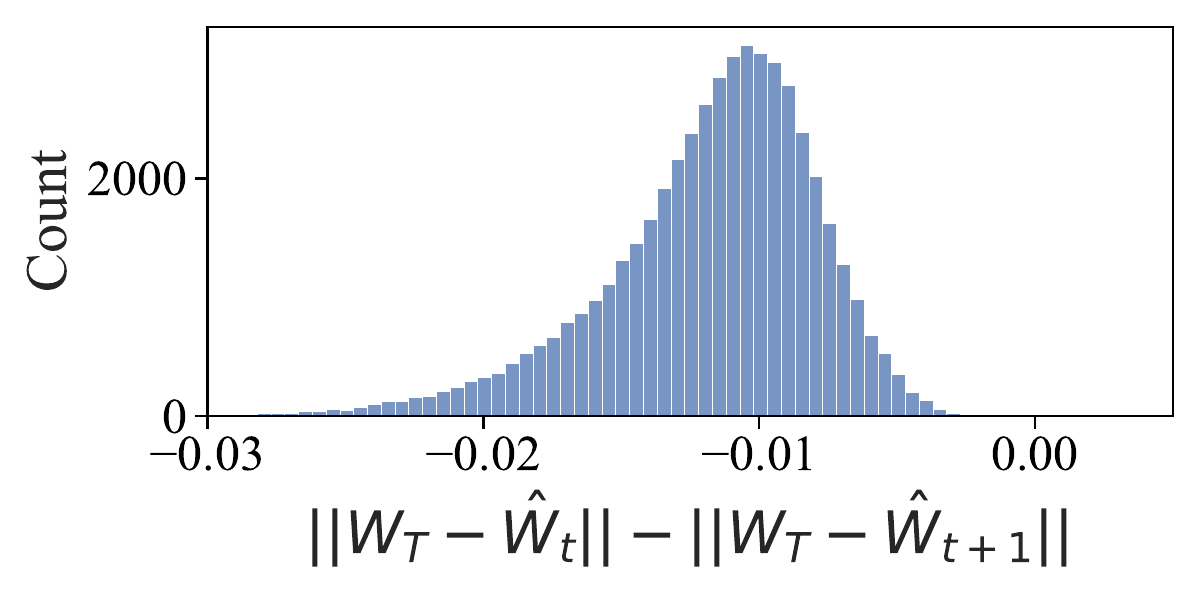}}

\subfloat[Epoch 0 (CIFAR-100)]{\includegraphics[width=0.83\linewidth]{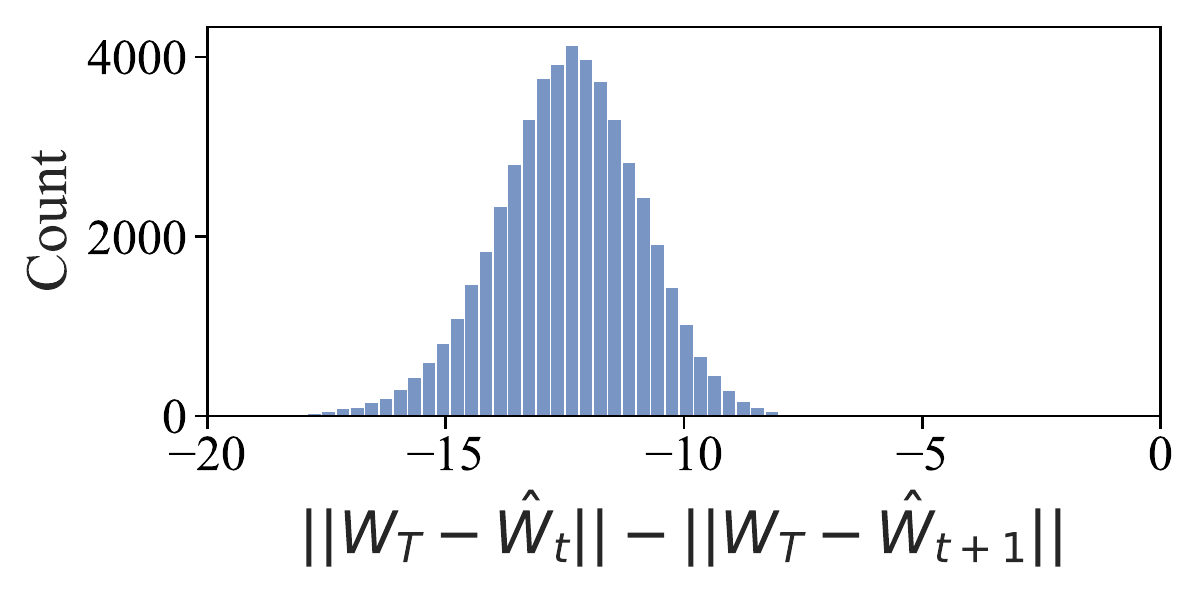}}

\subfloat[Epoch 200 (CIFAR-100)]{\includegraphics[width=0.83\linewidth]{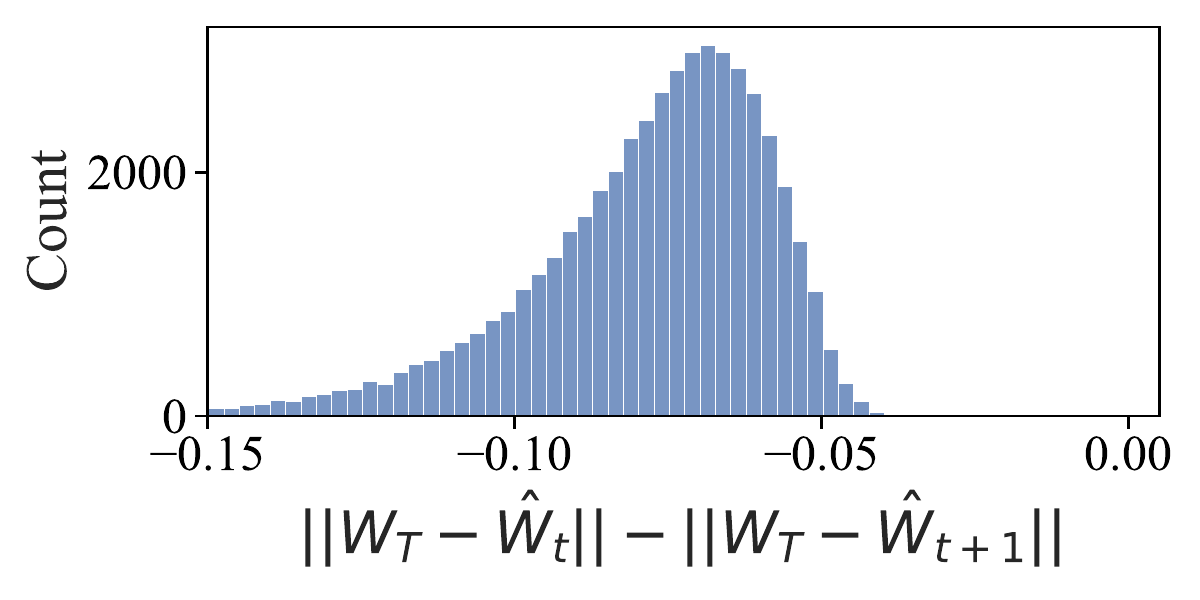}}
\caption{ \textbf{No update brings the adversary's model closer to the honest prover's with different initializations.} At any step, an adversary can select its next update ($\hat{W}_{t+1}-\hat{W}_t$) as a linear combination of the updates from any individual data points. We visualize if any update brings an adversary's intermediate state $\hat{W}$ closer to the honest prover's final model $W_T$. Indeed, with all updates negative in the above four histograms, it is impossible (at epoch 0 and 200 in the training process) for an adversary to force its model closer to the prover's using data from the prover's training distribution. Detailed results for adversary's models pre-trained for different number of steps are included in Figure~\ref{fig:boa_cifar10_leftovers} and Figure~\ref{fig:boa_cifar100} in Appendix~\ref{app:additional_figures}.}
\label{fig:hist}
\end{figure}

\subsection{Adversarial Reconstruction of a Proof for Known Model Weights}
\label{sssec:adv_reconstruct}

{We also examine spoofing strategies that attempt to construct} valid proofs ending in $\hat{W}_{\hat{T}}$ ({such that} $\hat{W}_{\hat{T}} = W_T$) using prior knowledge of $W_T$ ({weights of the victim model}). In other words, such adversaries attempt to direct legitimate gradient updates toward the desired victim model by manipulating the training data. We analyze two classes of spoofing strategies wherein the adversary uses: (a) original training data which is reordered by using data ordering attacks~\cite{DBLP:journals/corr/abs-2104-09667}; and (b) (generated) synthetic data. In studying these strategies, we assume a powerful adversarial model with access to the training dataset, as defined in Section~\ref{subsec:threat_model}. However, we emphasize that the protocol may be more robust by preventing the adversary from accessing the training distribution (\eg by keeping the dataset private) as mentioned in Section~\ref{subsec:threat_model}; we will discuss this in more detail in Section~\ref{sssec:data_commitment}). For now, we keep assuming this access and provide a stricter/more realistic assessment of security from the defender's perspective. 

\vspace{2mm}
\noindent{\bf 1. Data Ordering Attacks:} The adversary changes the order of the mini-batches during the training process to obtain a desired gradient update~\cite{DBLP:journals/corr/abs-2104-09667}. Through such attacks, we wish to understand the difficulty for an adversary to generate gradient updates using a new initialization ($\hat{W}_0$), resulting in a distinct and unrelated training trajectory to the victim model $W_T$. In this scenario, the adversary's objective is to minimize the distance between the parameters of its model and those of the victim (\ie $\min ||W_T - \hat{W}_{\hat{T}}||$) by reordering the training data. 

We know that a gradient update for a mini-batch is the mean of the gradient updates of the individual data points in the mini-batch. Therefore, we first evaluate how likely it is for the gradient update of a single data point to move the adversary's model closer to the victim's. We perform the following experiment: we update the weight, $\hat{W}_t$ by SGD with one data point from the training dataset to get to $\hat{W}_{t+1}$, and we compute the change in $\ell_2$ distance to the victim weights $W_T$ (\ie if $|| W_T - \hat{W}_t|| - || W_T - \hat{W}_{t+1}||>0$). This is repeated for every training data point of the dataset, and the results are plotted as a histogram in Figure~\ref{fig:hist}. We observe that irrespective of whether $\hat{W}_t$ was freshly initialized (\ie $\hat{W}_t = \hat{W}_0$) or if it was pre-trained for 200 epochs (using the victim's weights as the labeling oracle), none of the data points results in an update that leads to $W_T$. More detailed results for different amounts of pre-training are included in Figures~\ref{fig:boa_cifar10_leftovers} and~\ref{fig:boa_cifar100} in Appendix~\ref{app:additional_figures}.

{This analysis suggests that} data ordering attacks are unlikely to be successful for stochastic spoofing. It is worth noting that previous research has tackled the question of reconstructing updates transitioning from a known  weight initialization to another known weight from the same training run~\cite{DBLP:journals/corr/abs-2104-09667,thudi2021necessity}. However, in this case, our adversary is attempting to reconstruct the final weight from an {\em unknown} initialization.

\begin{figure}[t]
\centering
\subfloat[CIFAR-10]{\includegraphics[width=0.95\linewidth]{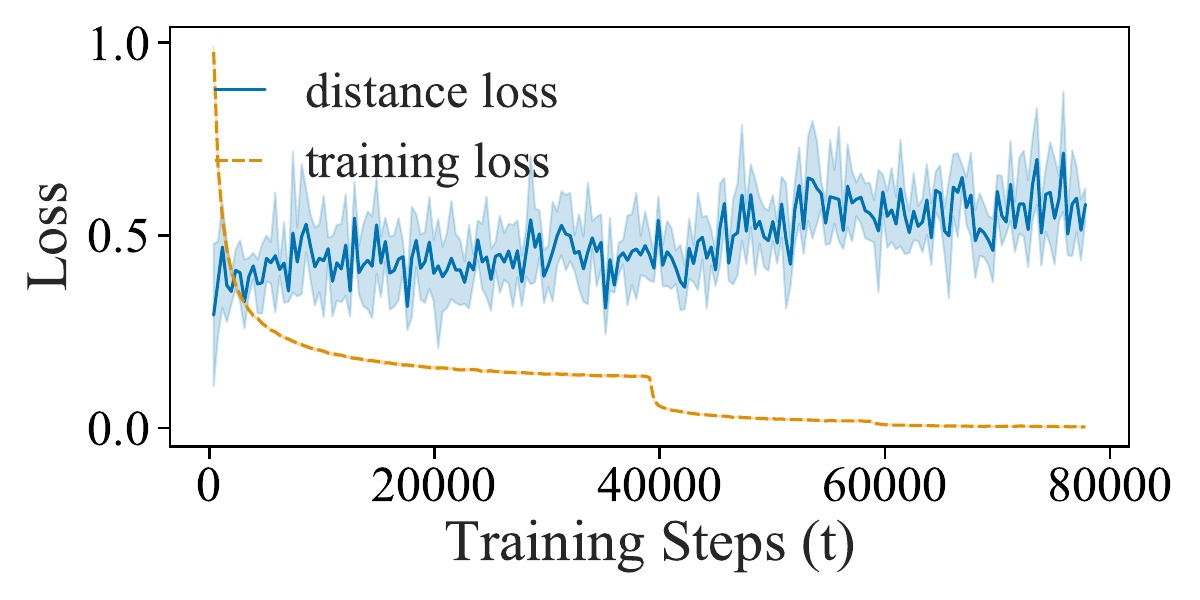}\label{fig:amplifier_intuition_legit}}

\subfloat[CIFAR-100]{\includegraphics[width=0.95\linewidth]{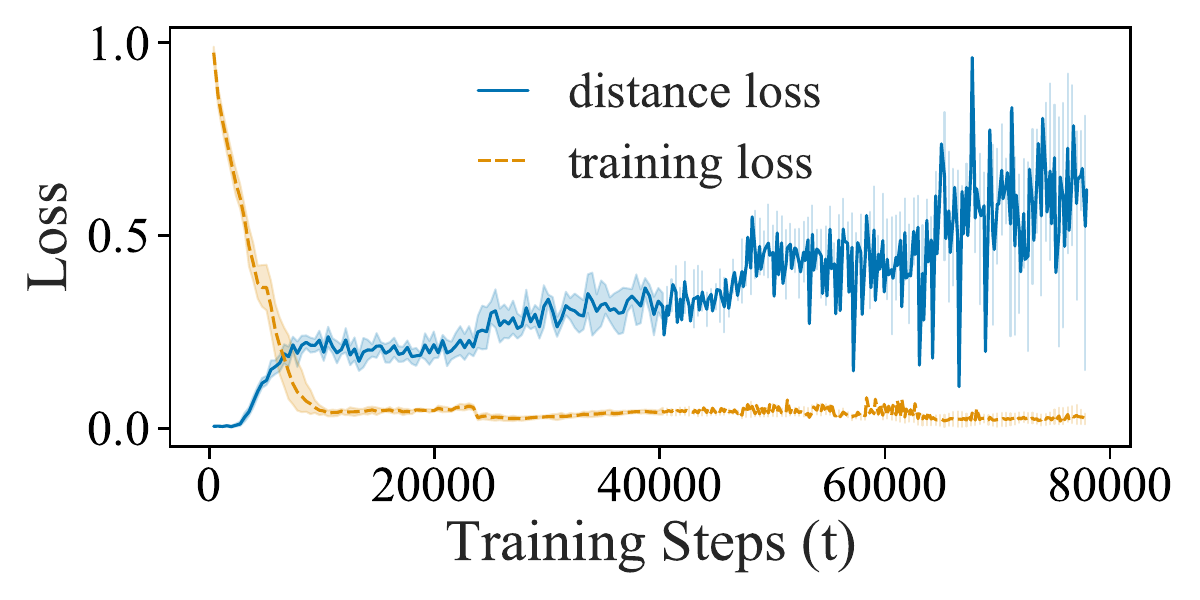}\label{fig:amplifier_intuition_spoof}}
\caption{ \textbf{Evolution of the model's loss when synthesizing adversarial data.} An adversary simultaneously optimizes a standard training penalty and a penalty designed to minimize the distance between the model and a target victim model $W_T$. Training is performed on synthetic data constructed to decrease distance to the victim model. We normalize both penalties to the range of $(0, 1)$ for readability. The trends of the two curves are opposite to each other. Thus, this attack is unlikely to obtain a model close enough to the victim model.
}
\label{fig:contradicting_loss}
\end{figure}

\vspace{2mm}
\noindent{\bf 2. Synthesizing Adversarial Data:} Instead of changing the mini-batch ordering, one can change the data points themselves. To this end, the problem can be formulated as follows: find a dataset that, upon training, results in final weights that are close to $W_T$. Below is one formulation of the problem:
\begin{equation*}
\hat{D}^* = \argmin_{\hat{D}} ||W_T - \argmin_{\hat{W}} \mathcal{L}(\hat{W}, \hat{D}) ||, 
\end{equation*} 
Note that $\mathcal{L}(\hat{W}, \hat{D}) $ is the loss evaluated on the model weights $\hat{W}$ and the dataset, $\hat{D}$. 
\footnote{This attack is different from Attack 1 of \zhang in the sense that we iteratively optimize each batch of data to decrease $||W_T - \hat{W}||$ while they focus on one batch of data until $||\hat{W_t} - \hat{W}|| \approx 0$ for some $\hat{W_t}$ created based on $W_T$.} 
The adversary cannot directly optimize the distance between their weights and the stolen weights (\ie $\min_{\hat{W}} ||W_T - \hat{W}||$). This is because this loss is not used for training, and including the loss term (which contains $W_T$) in the proof would essentially be an admission that the adversary is attempting to spoof the proof for a stolen model~\cite{jia2021proof}. We empirically evaluate this approach by synthesizing the adversarial dataset using gradient descent (\ie $\hat{D}^* = \argmin(||W_T - \hat{W}^*||)$) and update the adversary's model $\hat{W}^* = \argmin(\mathcal{L}(\hat{W}, \hat{D}))$ in an alternating manner. As shown in Figure~\ref{fig:contradicting_loss}, the experiment results show that the two losses often oppose each other (\ie one of the loss curves is increasing while the other is decreasing)~\cite{hansen1992new}. Though this does not disprove the possibility of further manipulating the losses to allow training, it is non-trivial and suggests possible incompatibility guarantees, as shown in Figure~\ref{fig:contradicting_loss}.

\vspace{2mm}
\noindent{\bf Summary:}  We demonstrate the importance of knowing the final model weights in bootstrapping a stochastic spoofing adversary. We evaluated two types of adversarial reconstructions of proofs based on stochastic spoofing. This allowed us to explore how difficult it is to converge to $W_T$ with valid gradient updates given knowledge of $W_T$. We observed that it is challenging to take a model state closer to another model state (obtained from a different random initialization) by simply reordering the training data points. Hence we designed experiments to evaluate whether it is possible to make a valid training trajectory that ends at the desired final weights via adversarial steps. We emphasize that for all of the experiments above we assumed the adversary has access to the training data. This capacity is assumed to obtain an empirical assessment closer to the worst-case adversary. Beyond data access, in Appendix~\ref{app:rna_attack}, we illustrate a spoofing strategy with the assumption that the adversary has the capacity to manipulate the internal states of the optimizer. Despite these additional capabilities, it is still empirically hard for the adversary to generate a valid sequence that reaches the exact weight $W_T$ while at the same time expending less computational power than a legitimate training run. This is because the adversary does not have the required knowledge from the honest prover's training run, such as the initial model state, the order of training data, and all other metadata which is contained in the PoL. Based on these empirical observations, we believe that such algorithms do not currently exist for DNN training. However, we note that any empirical study is non-comprehensive as other attacks may exist. Their development is likely to come with/from progresses in learning theory.

\section{Discussion}

\label{ssec:fs_attacks}

So far, we have formalized and empirically validated the necessary assumptions for proof verification to play its first role---efficient verification of valid proofs. We have also taken steps towards a formal understanding of the difficulties that surround
ensuring that the verification mechanism provides precedence: stochastic spoofs that are cheaper than honest training should not exist. An answer based on statistical learning to the latter remains elusive. In this section, we instead discuss how practical instantiations of PoL can leverage security primitives to circumvent these issues with non-ML solutions. We rely primarily on the concept of \emph{commitments}, commonly used in cryptography to establish \emph{precedence}.
Specifically, we revisit the discussion of data commitment and timestamping made by \jia. The analysis we conducted in this paper shows how certain choices made by \jia in that regard need to be revisited. %

\vspace{2mm}
\noindent{\bf Data Commitment:} 
\label{sssec:data_commitment} \jia require the prover to create signatures of the data used at each step. They do so to prevent the prover from {\em denying usage} of specific data segments at a later time (\ie serve as a cryptographic commitment). Such signatures also help circumvent spoofs that can arise from the non-uniqueness of a gradient-based update (\ie multiple data segments can potentially result in the same gradient update~\cite{thudi2021necessity,DBLP:journals/corr/abs-2104-09667}). While \jia advocate for public data release. We argue this is unnecessary, and may result in privacy violations when the data used to train the model is proprietary/sensitive, as cryptographic commitment schemes binds the prover to the data segment used at a particular step. The prover can share the data at the time of verification through a secure channel. See Figure~\ref{fig:data_commit} in Appendix~\ref{app:additional_figures} for the flow of communication between the prover and the verifier.

From our experiments (not presented here for the sake of brevity), we observe that training data is essential for the attacks by \zhang. While it is conceivable that one can synthesize data with the aforementioned property (\ie success in the context of synthetic adversarial update strategies), more analysis is required to understand the {\em cost} associated with data synthesis. Data commitment schemes force adversaries to create {\em all data} before submitting the spoof, and not only at verification time (and for steps being verified).

\vspace{2mm}
\noindent{\bf Timestamping:} \label{sssec:timestamping} 
We emphasize the need of timestamping the proof or its signature upon submission (or publishing it in a public ledger) as introduced in the original \pol protocol~\cite{jia2021proof}. This will prevent (a) replay attacks where the adversary submits the exact same \pol as the victim trainer; and (b) any attacks that involve the adversary using the exact model parameters of the victim model as the final state of their spoof.

\section{Conclusion}
\label{sec:conclusion}

Given the  open problems we concluded Section~\ref{sec:giving_guarantees_with_pol} and Section~\ref{sec:theory_cheap_spoofing} with, we now revisit the question we posted at the beginning of the paper: can \pol be robust? 
As it stands, the answer is clear: formally proving the robustness of a proof verification mechanism for PoL is not currently possible. 

One possible solution to circumvent these fundamental limits in ML theory and our understanding of optimization, is to rely more on cryptography. Although one of the motivations for approaches like \pol is to avoid using cryptography due to its limited scalability when it comes to training deep neural networks, it does not preclude us from envisioning that cryptographic primitives may be combined with ideas from \pol to provide analytical security guarantees. For example, as we have already discussed, data commitment mechanisms may be extended to other parts of the \pol protocol to reduce the attack surface of \pol.

Moving forward, we laid down generic properties that spoofing adversaries must satisfy.  We believe future work can expand on these results to prove if such adversaries can or cannot exist thereby answering one of the open problems towards formally guaranteeing the robustness of \pol. Similarly, future work can investigate how to instantiate the optimal verification strategy we showed exists, and whether adversaries can bypass this strategy.

\section*{Acknowledgements}
We would like to acknowledge our sponsors, who support our research with financial and in-kind contributions: Amazon, Apple, CIFAR
through the Canada CIFAR AI Chair, DARPA through the GARD project, Intel, Meta, NFRF through an Exploration
grant, NSERC through the COHESA Strategic Alliance, the Ontario Early Researcher Award, and the Sloan Foundation. Resources used in preparing this research were provided,
in part, by the Province of Ontario, the Government of Canada through CIFAR, and companies sponsoring the Vector
Institute.
We would also like to thank CleverHans lab group members for their feedback.

\newpage
{
\printbibliography
}

\appendices

\section{Table of Notations and Terminology}
\label{app:notations}

\begin{table*}[ht]
\normalsize
\centering
\begin{tabularx}{\linewidth}{c X}
\toprule
{\bf Variable} & {\bf Purpose} \\
\midrule
\midrule
$\prover$ & prover \\
$\adv$ & adversary \\
$\mathbf{W}$ & weight space of models \\
$k$ & the checkpointing interval \\ 
$s$ & number of steps/batches per epoch \\ 
$W_t$ & the $t$-th checkpoint in the honest prover's training process\\ 
$W'_t$ & the reproduced model weights by the verifier for model weights $W_t$ in the \pol\\ 
$\hat{W}_t$ & $t$-th checkpoint generated by adversary's spoofing process \\ 
$\hat{W}'_t$ & the reproduced model weights by the verifier for model weights $\hat{W}_t$ in the \pol\\ 
$W_T$ & the final weights of the victim model \\
$\hat{W}_{\hat{T}}$ & the final weights of the adversary's model \\
$g_t$ &  the update from step $t$ to $t+k$ in the prover's \pol (\ie $W_{t+k} - W_t $) \\
$g'_t$ & the update from step $t$ in the prover's \pol to step $t+k$ reproduced by the verifier (\ie $W'_{t+k} - W_t $)\\ 
$\hat{g}_t$ &  the update from step $t$ to $t+k$ in the adversary's \pol (\ie $\hat{W}_{t+k} - \hat{W}_t $)\\
$\hat{g}'_t$ &  the update from step $t$ in the adversary's \pol to step $t+k$ reproduced by the verifier (\ie $\hat{W}'_{t+k} - \hat{W}_t $) \\ 
$D$, $D_t$ & the honest prover's dataset; if with subscript $t$,  then it represents the batch of data used in the $t^{th}$ training step\\
$\hat{D}$, $\hat{D}_t$ & the adversary's dataset; if with subscript $t$,  then it represents the batch of data used in the $t^{th}$ training step \\
$M_t$ & the honest prover's training metadata (\eg hyperparameters) at the $t^{th}$ training step\\
$\hat{M}_t$ & the adversary's training metadata (\eg hyperparameters) at the $t^{th}$ training step \\
$\delta$, $\delta_t$ & the threshold to bound the step-wise noise for verification as described in the original Proof-of-Learning (\pol) algorithm; if with subscript $t$, then it is specific to the $t^{th}$ training step \\
$Q$ & number of updates the verifier will verify per epoch ($Q$ in the Top-$Q$ mechanism) \\
$\eta$ & the learning rate \\
$d(\cdot)$ & some distance metric \\
$\alpha$  & scaling factor for Lemma~\ref{lem:adaptive_threshold} \\
$||\cdot||$  & norm of a given vector, if not otherwise specified, then it is $\ell_2$ norm \\
$\tau$ & a specific value of true positive rate (TPR) \\
$\mathcal{U}(\cdots)$ & function of model weights, training data, and training metadata that updates the model weights \\
$C$ & cost function \\
$(\mathbf{g},d,\boldsymbol{\delta})$-proofs & proofs using update rules $\mathbf{g}_i \in \mathbf{g}$ passing thresholds $\boldsymbol{\delta}$ in metric $d$\\
$A_{D,W_T}$ & set of $(\mathbf{g},d,\boldsymbol{\delta})$-proofs ending in $W_T$ generated by a specific dataset $D$ \\
$\mathcal{P}$ & a proof generated by \pol\\
$F: \cdots \rightarrow \cdots$ & algorithm for creating proof \eg Algorithm 1 of Jia~\etal~\cite{jia2021proof} \\
$\mathbb{S}$ & subset of model updates \\
$B_{r}(v)$ & balls centered at vector $v$ of radius $r$ \\
\bottomrule
\end{tabularx}
\caption{Notations}
\label{tab:notation_attack}
\end{table*}

\begin{table*}[ht]
\normalsize
\begin{tabularx}{\linewidth}{l X}
\toprule
{\bf Spoofing} & {\bf Definition} \\
\midrule
\midrule
Retraining-based Spoofing & The adversary aims to spoof the exact proof of an honest trainer ending at $W_T$; Jia~\etal~\cite{jia2021proof} showed it is more computationally expensive even if the adversary knows the data points used at each training step\\
Structurally Correct Spoofing & The adversary aims to spoof a proof ending at $W_T$ which contains at least 1 invalid update, \ie update that cannot be produced by honest training, but still passes the verification; it is closely related to \pol's role on efficient verification as described in Section~\ref{sec:giving_guarantees_with_pol}, and corresponding examples can be found in Section~\ref{sec:fundamentals_revisited}\\
Stochastic Spoofing & The adversary aims to spoof a valid proof ending at $W_T$ with less cost than training; the precedence role of \pol described in Section~\ref{ssec:stochastic_spoofing_experiment} relies on  the non-existence of such spoofing with less cost than training, and empirical results are in Section~\ref{ssec:stochastic_spoofing_experiment}\\
Distillation-based Spoofing & The adversary aims to spoof a valid proof ending at a functionally similar state to $W_T$ (\eg model extraction). We do not discuss this type of spoofing in this paper \\
\bottomrule
\end{tabularx}
\caption{ \textbf{Types of spoofing:} listed are the types of spoofing against \pol categorized by Jia~\etal~\cite{jia2021proof} along with pointers to where we discuss them or why they are not discussed.}
\label{tab:spoofing}
\end{table*}

We defined the notations in Table~\ref{tab:notation_attack}, and the categorization of spoofing in Table~\ref{tab:spoofing}.
\section{Computational Cost Analysis for the Attacks}
\label{app:cost}
Here we provide a detailed analysis of the computational cost of the attacks mentioned in the paper. We use the cost of 1 forward propagation (FP) as the basic unit, which is approximately equal to N floating point operations (where N is the number of model parameters). Note that 1 back propagation costs approximately 2 FPs, and adding parameters of two model states together costs approximately 1 FP. Other notation used in this appendix includes: number of iterations of updating the adversarial example (n), and number of batches per epoch(s). All attacks first linearly interpolate between a random initialized state and the final stolen model state so they can spoof with the same length, so w.l.o.g. we may compare their cost step-wise (\ie from state $t$ to state $t+k$).  

\vspace{2mm}
\noindent \textbf{Infinitesimal Update Attack:} Apart from linear interpolation, the Infinitesimal Update attack does not require any other computation, and the linear interpolation is done once for every model parameter so the step-wise cost is 1 FP.

\vspace{2mm}
\noindent \textbf{Attack by Zhang~\etal~\cite{zhang2021adversarial}:} Attack 2 by Zhang~\etal needs to interpolate for every single update between state $t$ and state $t+k$ (e.g., $t$ to $t+1$, $t+1$ to $t+2$, …), so 1 FP is required for every update (\ie $k$ FPs in total for linear interpolation).  Besides, for each of these updates, 1 FP and 1 backward propagation (= 3 FPs) are needed to compute the gradient of the model. Then another backward propagation is required to differentiate the norm of the gradient with respect to the inputs to the model, this is essentially a second order gradient and the cost depends on the algorithm used to compute it. By measuring time of the code released by Zhang~\etal, we found empirically it takes more than 20 times than the gradient computation, so 40 FPs. Adding all these together, 43 FPs is needed for a single iteration of creating the adversarial examples, so Attack 2 costs at least $(43 \cdot n + 1) \cdot k$ FPs per step.

Zhang~\etal tried to parallelize their Attack 2, which resulted in a different attack (their Attack 3), but it would still cost $43 \cdot n + 1$ FPs per step.

\vspace{2mm}
\noindent \textbf{Blindfold top-$Q$ Attack:} There are two cases here: (a) if it is one of the top-$Q$ updates, then $k$ valid gradient updates (3 FPs) need to be computed, so the cost is $3 \dot k + 1$ FPs (1 comes from linear interpolation); (b) if it is not a top-$Q$ update, then nothing besides linear interpolation needs to be done, so the cost is 1 FP. In every epoch, there is $Q$ top-$Q$ updates and $s-Q$ non-top-$Q$ updates, so the expected step-wise cost is $(Q/s) \cdot (3 \cdot k + 1) + (s-Q)/s = (3 \cdot k \cdot Q) / s + 1$ FP.
\section{Adversarial Reconstruction of Known Model Weights by Adversarial Update Rules. }
\label{app:rna_attack}
\vspace{1mm}
\noindent{\bf Designing Adversarial Update Rules:} From~{Section}~\ref{sssec:adv_reconstruct}, we found that it's hard to successfully create a stochastic spoof even when the original training data is given to the adversary. Herein, as a worst-case scenario in addition to access of training data, we consider that the adversaries can manipulate the internal states of the optimizer to use adversarial update rules. Note that the ability for an adversary to control the optimizer is not part of the threat model of \pol. We are only exploring to see if certain optimizers/update rules can help converge to a specific final weight $W_T$ faster.
{The adversarial update rule we considered here} is inspired by an existing optimizer that exploits information about the training paths for DNNs, Regularized Non-linear Acceleration (RNA)~\cite{scieur2019regularized}. RNA is a convergence acceleration technique for generic optimization problems. It extrapolates the trajectory path history for iterative optimization problems to improve the convergence. Inspired by RNA, we designed an adversarial update rule that uses a linear combination of intermediate weights, or individual gradient directions, to reach a next weight also minimizes the distance to the victim model's weights (depicted in Figure~\ref{fig:rna_intuition}). For each RNA round, the adversary regularly trains for a few steps and records the intermediate weights $\hat{W}_i$'s (and their updates $\hat{g}_i = \hat{W}_{i+1} - \hat{W}_i$). The adversary then solves for coefficients such that $\hat{c}^*_i = \argmin_{\hat{c}_i} || W_T - \sum_{i} \hat{c}_i \cdot \hat{g}_i ||$ and obtains the weights for the next step as $\sum_{i} \hat{c}^*_i \cdot \hat{g}_i$ to minimize its distance to the victim model, $W_T$. In honest RNA training, the extrapolation coefficients are calculated based on the trajectory information. In the adversarial update rule, the coefficients are spoofed and the adversary argues the hyperparameters were selected by this customized update rule. We call this adversarial update rule the {\em \rnaattack}. 

We empirically evaluate the effectiveness of \rnaattack.
We perform 10 RNA steps per epoch and the results are shown in Figure~\ref{fig:rna_results}. We measure the distance between the adversary's weights generated by the RNA attack and $W_T$. As we can see from the figure, as opposed to the previous two attacks we discussed, when using the RNA attack the distance between the adversary's weights and $W_T$ consistently decreases. However, after 
{the same computational cost as honest training (200 epochs),}
the distance remains significantly larger than $0$. Thus, the adversary cannot generate a valid spoof.

\begin{figure*}[ht]
\centering
\subfloat[Intuition of \rnaattack]{\includegraphics[width=0.5\linewidth]{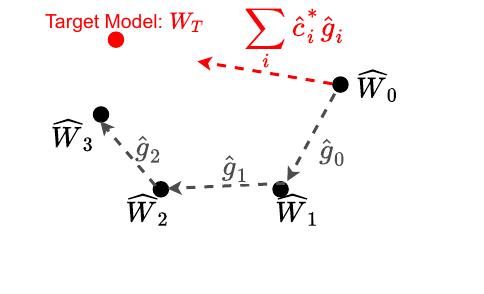}\label{fig:rna_intuition}}
\subfloat[Effectiveness of \rnaattack]{\includegraphics[width=0.45\linewidth]{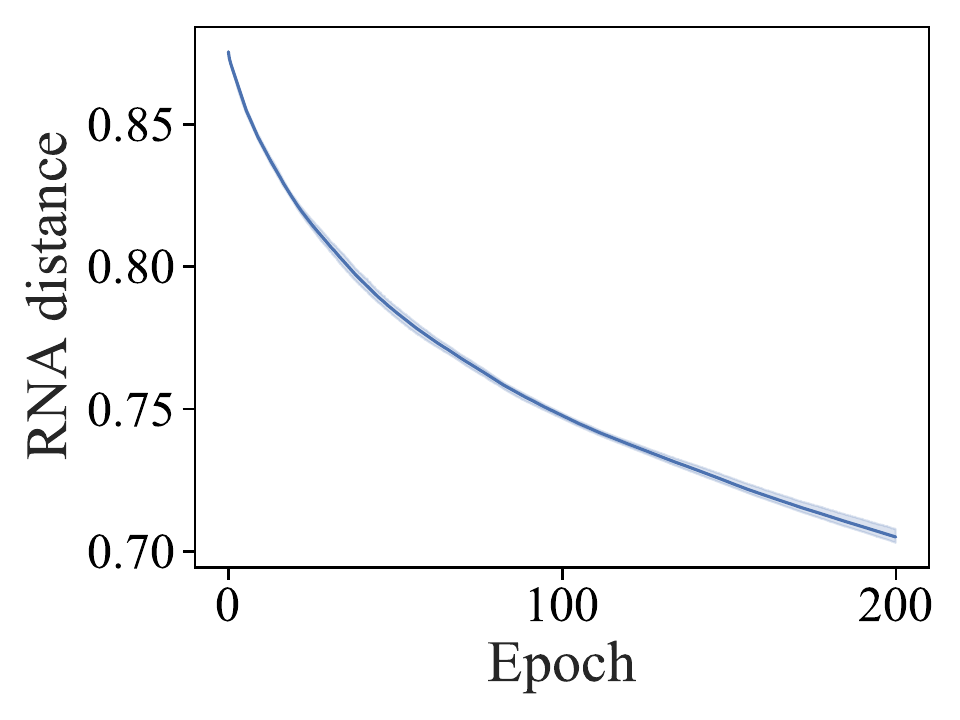}\label{fig:rna_results}}
\caption{ \textbf{Illustrating the \rnaattack and its Effectiveness.} (a) For each step of the attack, the RNA adversary solves for the optimal coefficients $\hat{c}^{*}_i$ that minimize the distance between the victim model and a linear combination of updates from honest training steps. (b) The distance between the victim model and the adversary's model generated using the \rnaattack is plotted against the total number of epochs for honest training. We followed the same setup as defined at the beginning of Section~\ref{sec:fundamentals_revisited} using ResNet-20 on CIFAR-10 dataset.  We repeat this experiment five times to get confidence intervals, which are too small to see. The distance is decreasing but still significantly larger than 0 after 200 epochs (honest training cost). Thus, a spoof cannot be generated.}

\label{fig:rna}
\end{figure*}
\section{Novelty compared to previous attacks by \zhang}
\label{app:novelty}

\noindent {\bf Threat Model}: Previous attacks by \zhang assumed that the adversary has knowledge of  (1) the training data, (2) the parameters used for creating the proof \eg the checkpointing interval $k$, and (3) the verification threshold $\delta$). Regarding (2), the adversary is also assumed to have the capability to modify the checkpointing interval $k$, which is unrealistic: this value should be set by the verifier. Regarding (3), we note that this assumption violates the threat model initially stated in Jia et al.~\cite{jia2021proof}: indeed, there is no reason for the verifier to reveal the parameters used for verification to the provers and adversaries, yet alone let them set the values of these parameters. Overall, these three assumptions make for an unrealistic threat model, which at times violates the threat model initially stated by Jia et al.~\cite{jia2021proof}. Instead, we introduce in our work attacks that can be mounted within a simpler threat model. 

\noindent {\bf Customized Attacks}: The attack by \zhang are tailored to the CIFAR10 dataset: they were found to perform significantly worse on CIFAR100 despite extensive hyperparameter tuning. Instead, we propose attacks that are independent of the training setup (such as the model architecture and dataset). Our adversaries only assume knowledge of  the parameters of the model which they aim to spoof the proof for.

\noindent {\bf Formal Analysis}: While prior work by \zhang identified limitations of \pol \textit{empirically}, in a threat model whose limitations we identified above, we are the first to \textit{systematically} study the assumptions needed to prove PoL’s robustness. We accordingly taxonomize the vulnerabilities and corresponding attacks along the assumptions being exploited: (a) reproducibility, and (b) representativeness (refer Section~\ref{sec:giving_guarantees_with_pol}). We further show that validating these assumptions reduces to solving fundamental open questions in deep learning theory. Out of the two types of vulnerabilities, \zhang only focused on empirically exploiting the vulnerability caused by the reproducibility assumption. Some computationally expensive components of their attacks are unnecessary for exploiting this. Furthermore, they did not account for the vulnerability caused by the representativeness assumption. Therefore, our analysis is also the first attempt to explore this previously unexamined aspect of PoL vulnerability.

\noindent {\bf Attack Performance}: Our attack is more effective because it exploits the aforementioned vulnerabilities we uncovered. To create an update whose magnitude is lesser than a certain threshold, \zhang’s attack relies on expensive optimization procedures. Instead, our attack uses a small learning rate in the proof it produces to achieve the same effect in an inexpensive manner. Apart from being independent of the training setup, (a) we achieve more than 40x speedup in all scenarios since our attack does not require any form of training, and (b) the normalized reproduction error of our attack is consistently smaller on CIFAR10, and even outperforms \zhang’s attack by more than 10x on CIFAR100. %
In addition, we also proposed a novel attack exploiting  the assumption that the correctness of larger training updates (in terms of magnitude) may imply the correctness of the entire training. This attack is computationally efficient and exploits a newly discovered vulnerability not discussed by previous work.

\section{Additional Figures}
\label{app:additional_figures}

Figure~\ref{fig:data_commit} illustrates the communication between the verifier and the prover described in Section~\ref{sssec:data_commitment}. 
Figures~\ref{fig:defense_intuition_adp_delta} and~\ref{fig:adaptive_delta_all} demonstrate the intuition for Lemma~\ref{lem:adaptive_threshold} and its empirical validation respectively. 
Figures~\ref{fig:boa_cifar10_leftovers}, and~\ref{fig:boa_cifar100} are additional evaluation results for experiments in~\ref{sssec:adv_reconstruct}.

\begin{figure}[ht!]
    \centering
    \includegraphics[width=\linewidth]{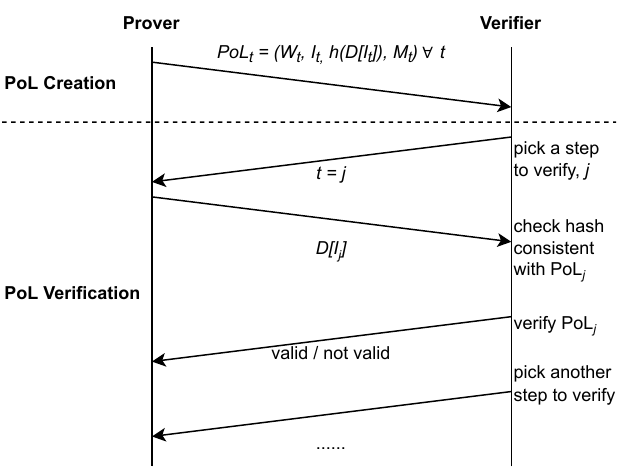}
    \caption{\textbf{Overview of the communication/data exchange between the verifier and the prover:} during proof creation, model checkpoints, training metadata, indices and hashes of training data are submitted to the verifier. During proof verification, the prover sends training data at step $j$ to the verifier, who then checks their hashes are consistent with the \pol. The verification is done by reproducing the model at step $j+1$ and verifying it is similar to the checkpoint at $j+1$ in the \pol.}
    \label{fig:data_commit}
\end{figure}

\begin{figure*}[t]
\centering
\subfloat[Valid Updates (with small magnitudes)]{
\includegraphics[height=4cm]{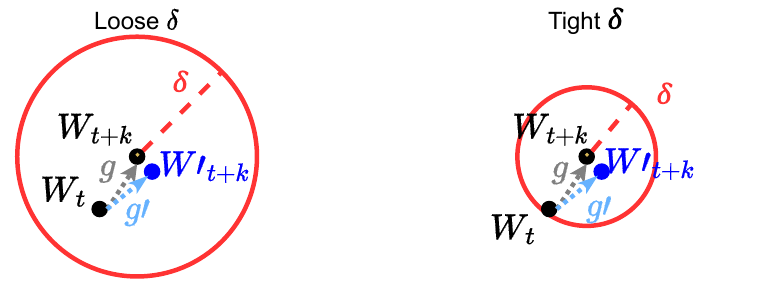}
\label{fig:adp_delta_intuition_legit}
}
\vspace{0.5mm}
\subfloat[Spoofed Updates]
{
\includegraphics[height=4cm]{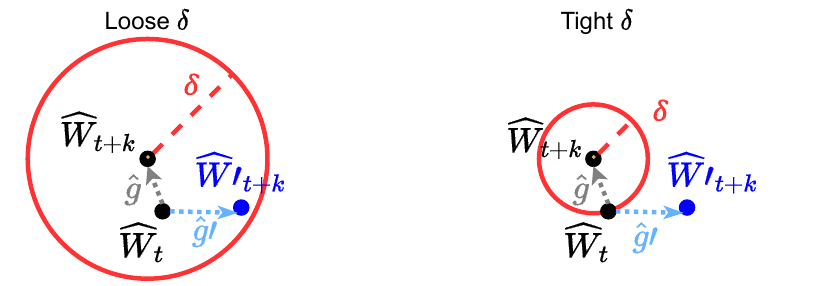}
\label{fig:adp_delta_intuition_spoof}
}
\caption{ Intuition for Lemma~\ref{lem:adaptive_threshold} for (a) valid updates $g$ and (b) spoofed updates $\hat{g}$ (where $g', \hat{g}'$ denote their verifier-reproduced versions). We visualize the setting where the norm of the update or reproduced update is less than the noise tolerance $\delta$. For a valid update, the discrepancy between $g_t$ and $g'_t$ is small because it is caused by noise. Therefore, scaling the $\delta =\alpha \cdot \min(||\hat{g}||, ||\hat{g}'||) $ (here we visualize using $\alpha=1$) following Lemma~\ref{lem:adaptive_threshold} should not make their discrepancy exceed the tolerance. However, in a spoofed update with near-zero updates, $\hat{g}_t$ and $\hat{g}'_t$ exceed this tighter noise threshold and fail verification.
}
\label{fig:defense_intuition_adp_delta}
\end{figure*}

\begin{figure*}[ht!]
    \centering
    \subfloat[Infinitesimal Update Attack \\ (CIFAR-10)
\label{fig:adaptive_delta_small_lr}]
{
\includegraphics[width=0.4\linewidth]{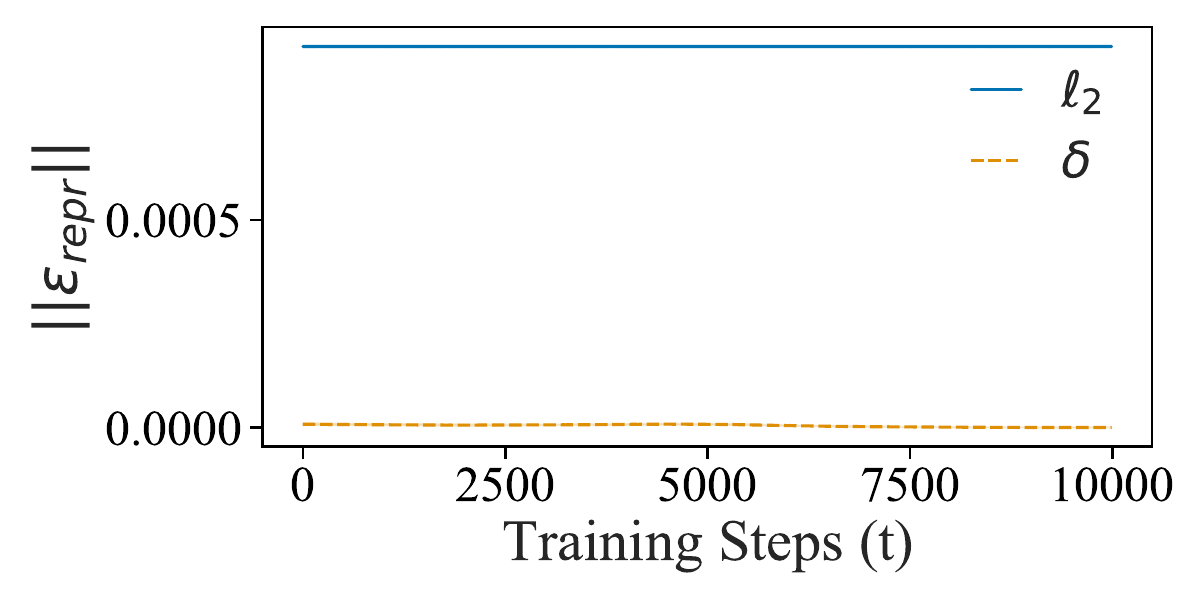}
}
\subfloat[Attack 2 by Zhang~\etal~\cite{zhang2021adversarial} \\ (CIFAR-10)
\label{fig:adaptive_delta_normal}]{
\includegraphics[width=0.4\linewidth]{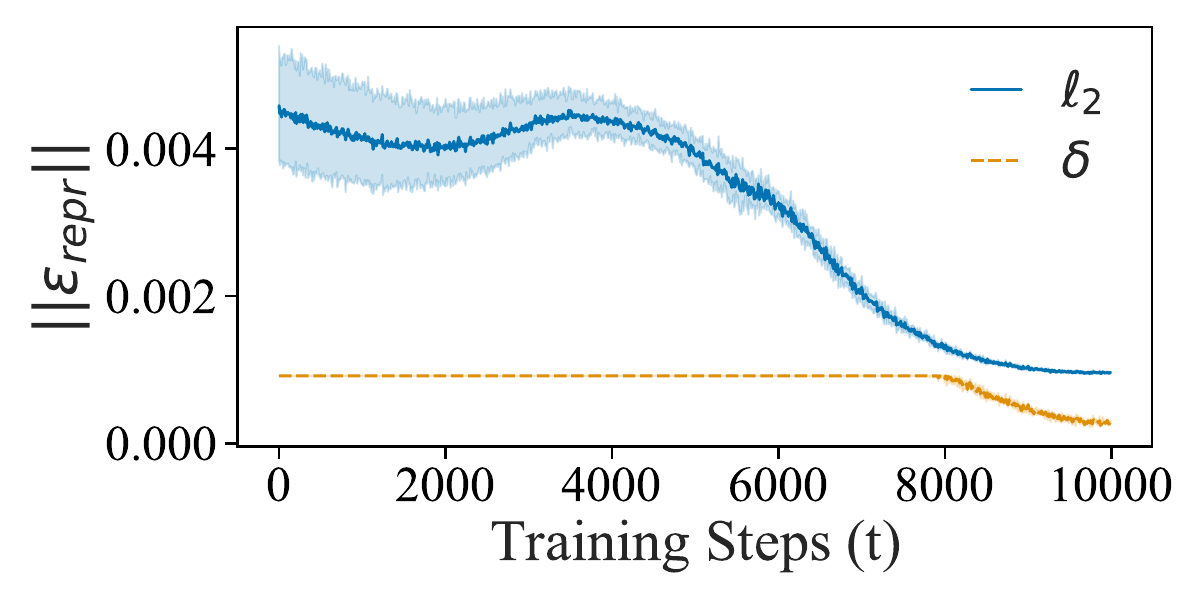}
}
\\
\subfloat[Infinitesimal Update Attack \\ (CIFAR-100)]
{
\includegraphics[width=0.4\linewidth]{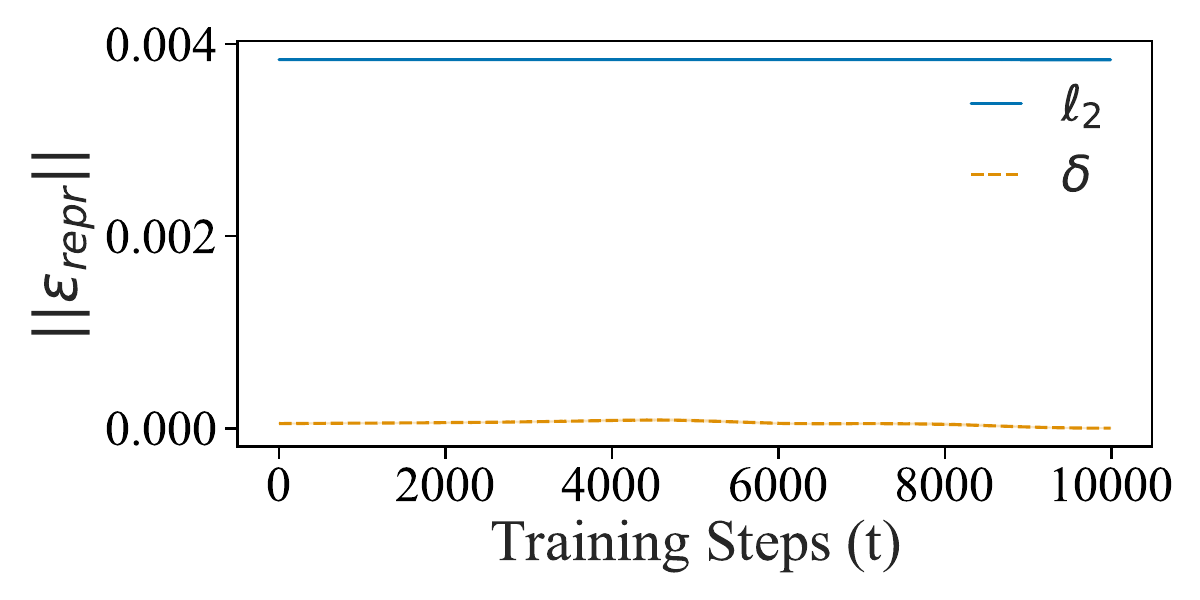}
}
\subfloat[Attack 2 by Zhang~\etal~\cite{zhang2021adversarial} \\ (CIFAR-100)]
{
\includegraphics[width=0.4\linewidth]{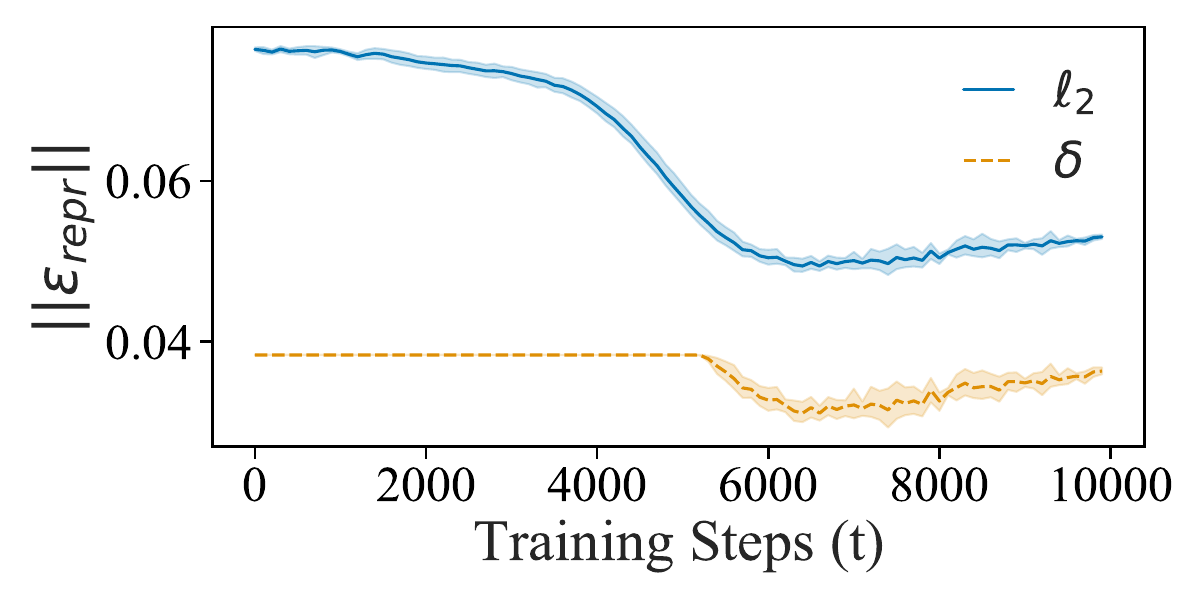}
}
\label{fig:adaptive_delta}
    \caption{ 
    \textbf{Empirical validation of Lemma~\ref{lem:adaptive_threshold}:} The same setup as Figure~\ref{fig:baseline} is used except the threshold is set adaptively following Lemma~\ref{lem:adaptive_threshold} (with $\alpha=1$). Observe that now the \nre is consistently greater than $\delta$ (\ie both known spoofs would be rejected). To confirm the intuition shown in Figure~\ref{fig:adp_delta_intuition_legit} that Lemma~\ref{lem:adaptive_threshold} does not incur false rejections, we empirically verify that a model trained with $100$x smaller learning rate consistently passes verification (across $5$ runs).
    }
    \label{fig:adaptive_delta_all}
\end{figure*}

\begin{figure*}[ht!]
    \centering
    \subfloat[Step 0]{\includegraphics[width=0.3\linewidth]{figures/hist_CIFAR10_resnet20_0_1_dist_2_long.pdf}}
    \subfloat[Step 10000]{\includegraphics[width=0.3\linewidth]{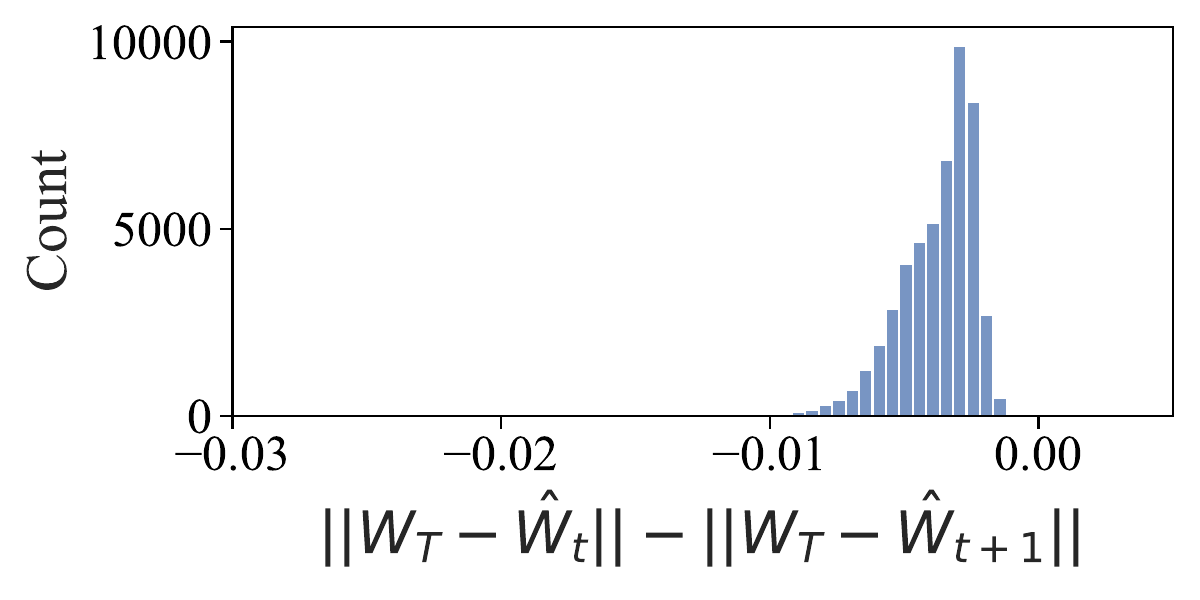}}
    \subfloat[Step 20000]{\includegraphics[width=0.3\linewidth]{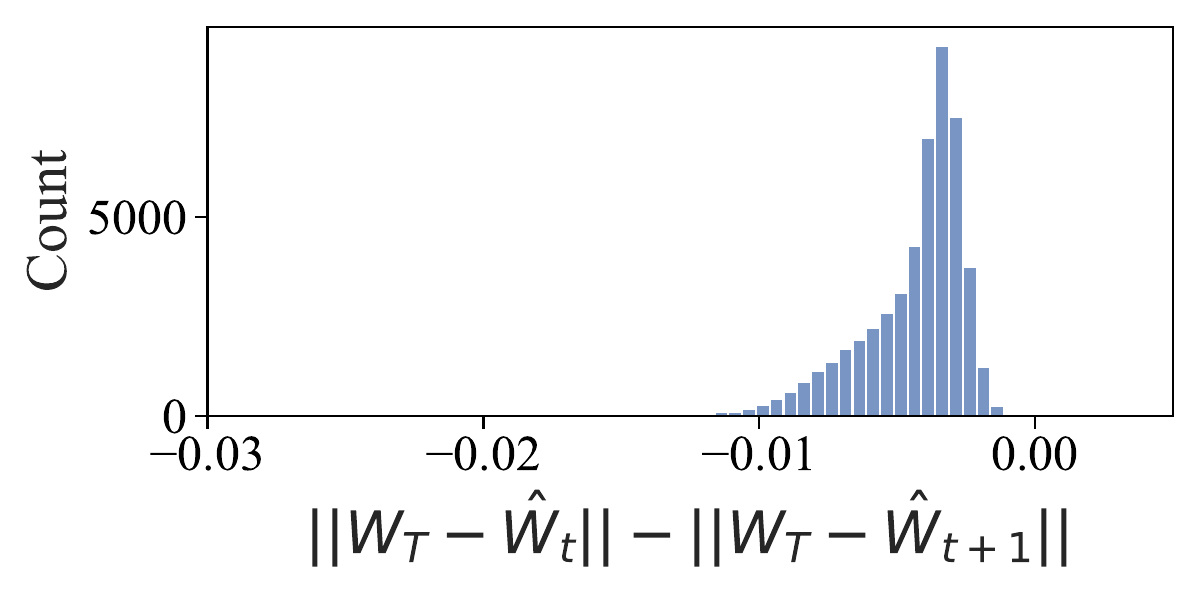}} 
    
    \subfloat[Step 30000]{\includegraphics[width=0.3\linewidth]{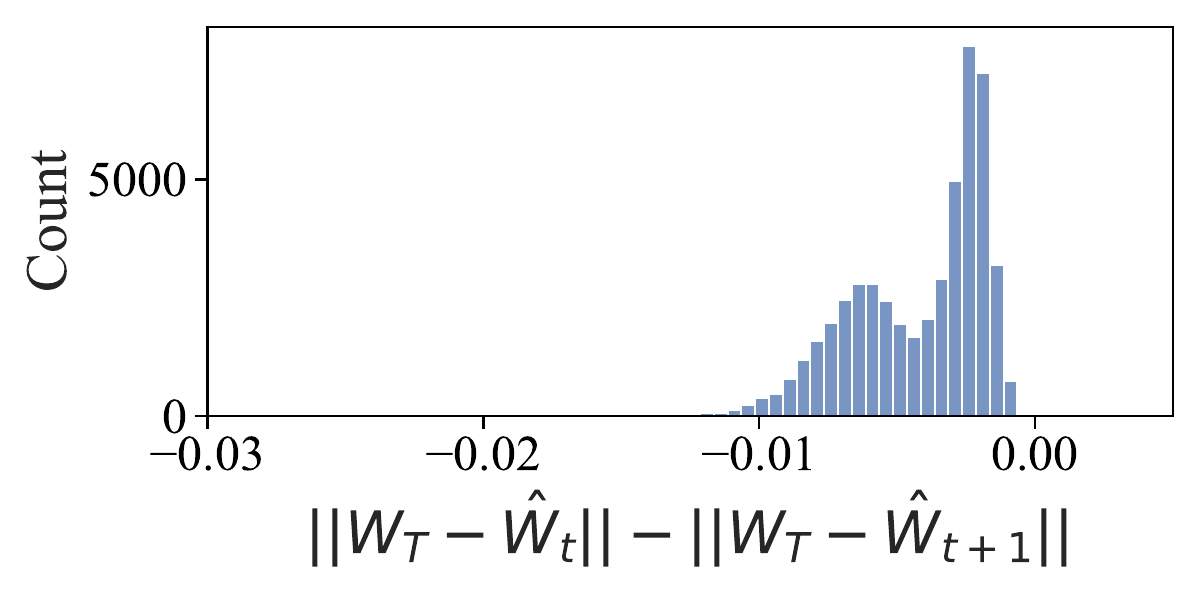}}
    \subfloat[Step 40000]{\includegraphics[width=0.3\linewidth]{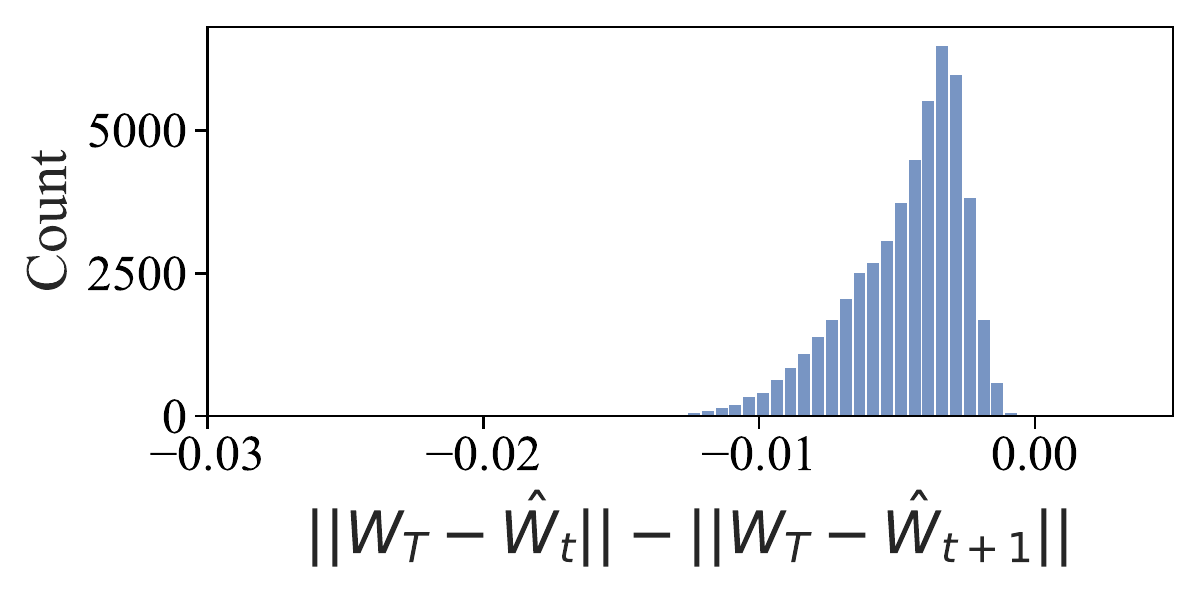}}
    \subfloat[Step 50000]{\includegraphics[width=0.3\linewidth]{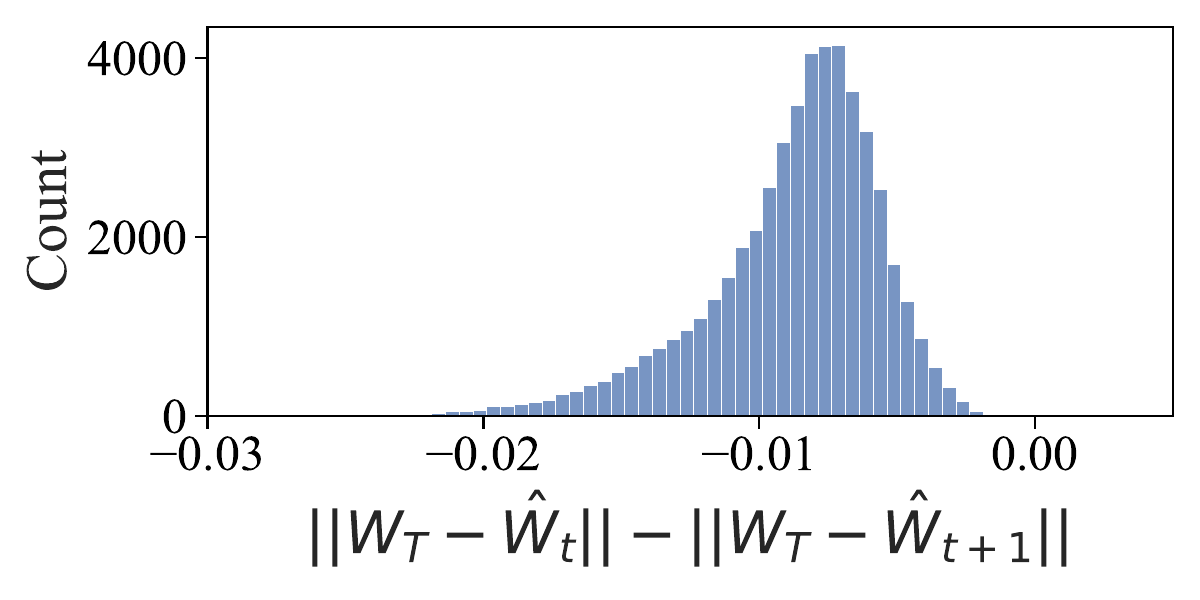}}
    
    \subfloat[Step 60000]{\includegraphics[width=0.3\linewidth]{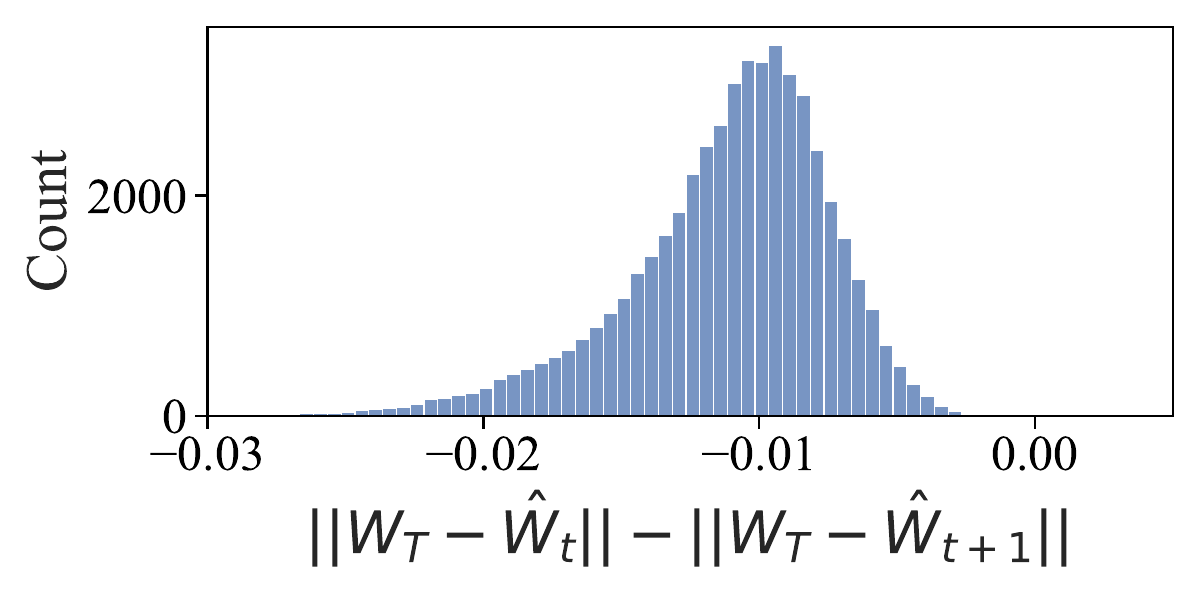}}
    \subfloat[Step 70000]{\includegraphics[width=0.3\linewidth]{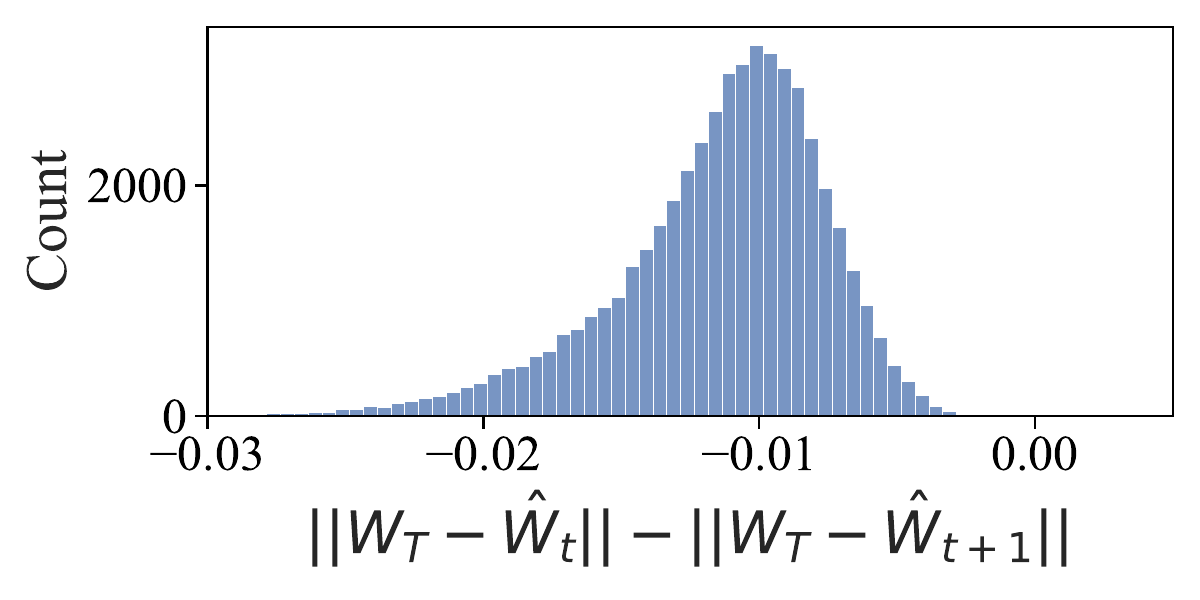}}
     \subfloat[Training Finished (Epoch 200)]{\includegraphics[width=0.3\linewidth]{figures/hist_CIFAR10_resnet20_78125_1_dist_2_long.pdf}}
    \caption{ 
    \textbf{No individual update pushes the adversary's model $\hat{W}_t$ closer to the prover's final model $W_T$.} These are plotted in the same setting as Figure~\ref{fig:hist} to illustrate finer grained information on the evolution of the updates throughout training. Note that all gradients at all intermediate steps in training push the adversary's model away from $W_T$, as indicated by the negative values on the $x$-axis. We use a ResNet-20 on the CIFAR-10 dataset.
    }
    \label{fig:boa_cifar10_leftovers}
\end{figure*}

\begin{figure*}[ht!]
    \centering
    \subfloat[Step 0]{\includegraphics[width=0.3\linewidth]{figures/hist_CIFAR100_resnet50_0_100_dist_2_long.pdf}}
    \subfloat[Step 10000]{\includegraphics[width=0.3\linewidth]{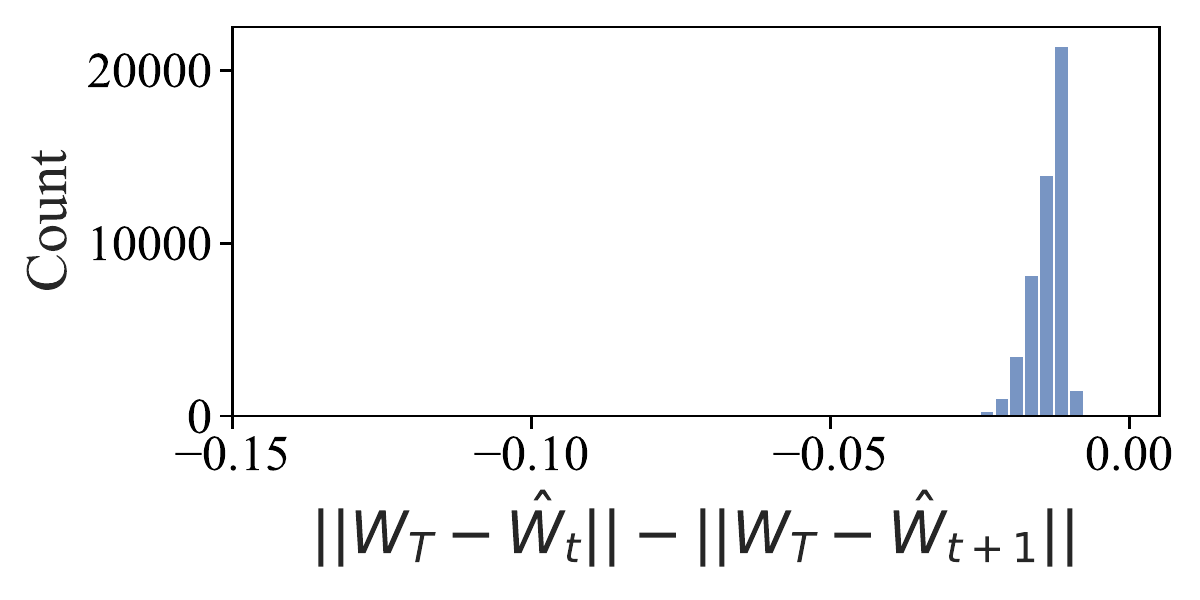}} 
    \subfloat[Step 20000]{\includegraphics[width=0.3\linewidth]{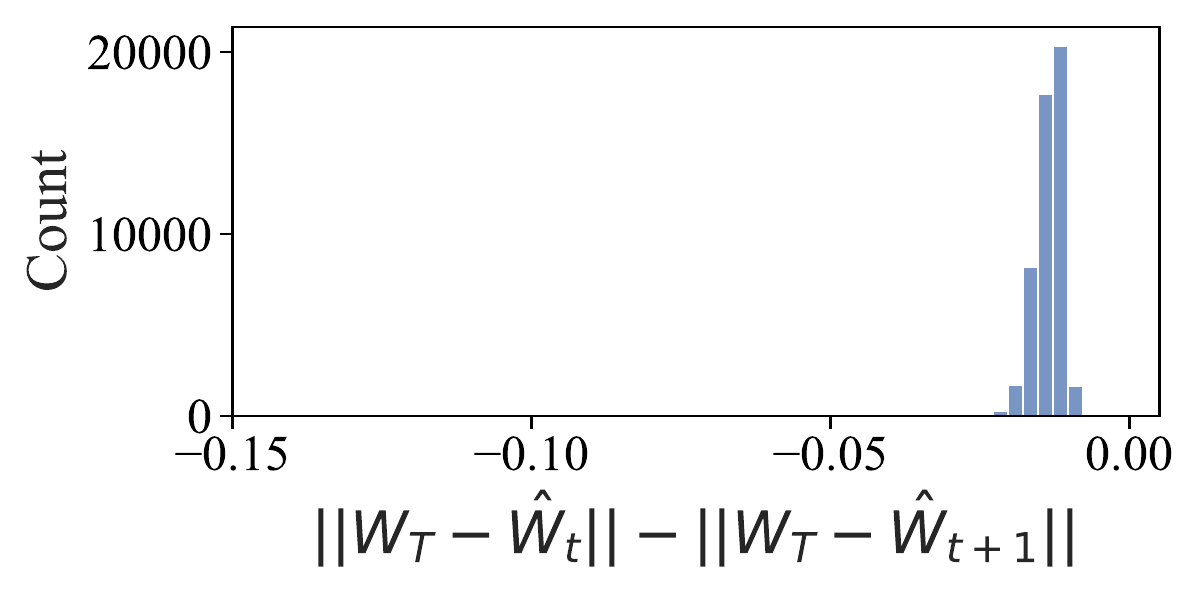}}
    
    \subfloat[Step 30000]{\includegraphics[width=0.3\linewidth]{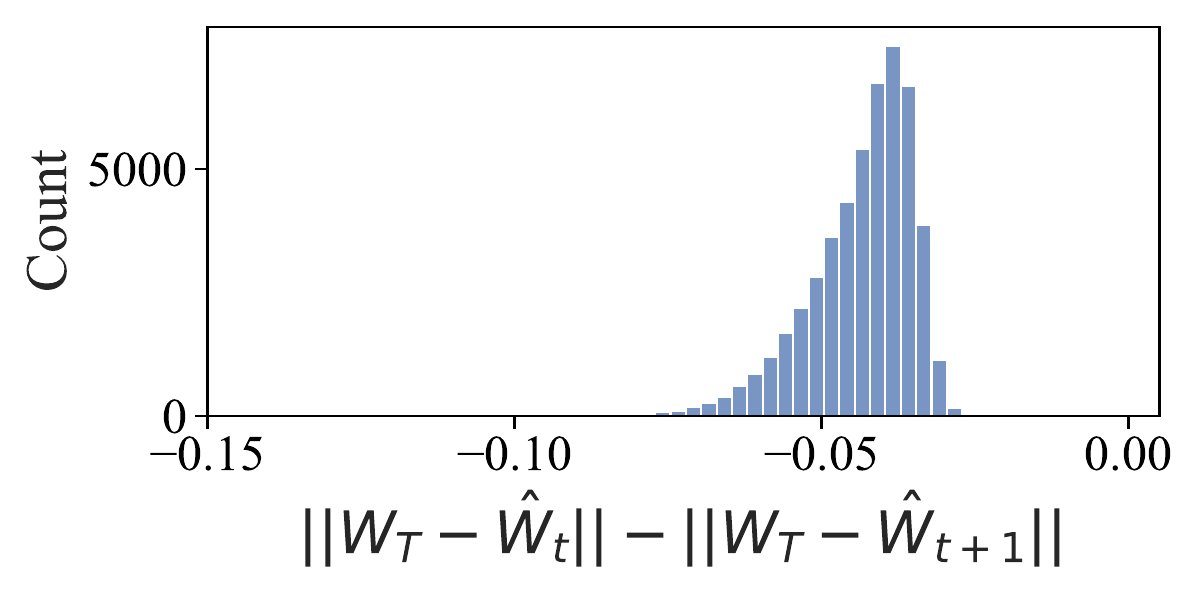}} 
    \subfloat[Step 40000]{\includegraphics[width=0.3\linewidth]{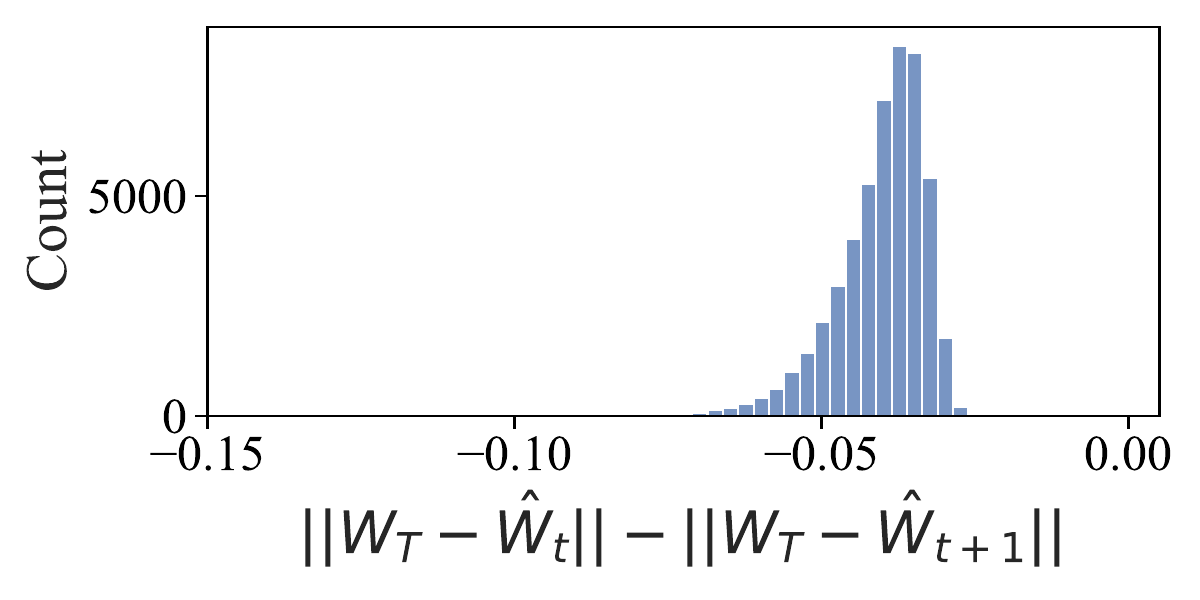}}
    \subfloat[Step 50000]{\includegraphics[width=0.3\linewidth]{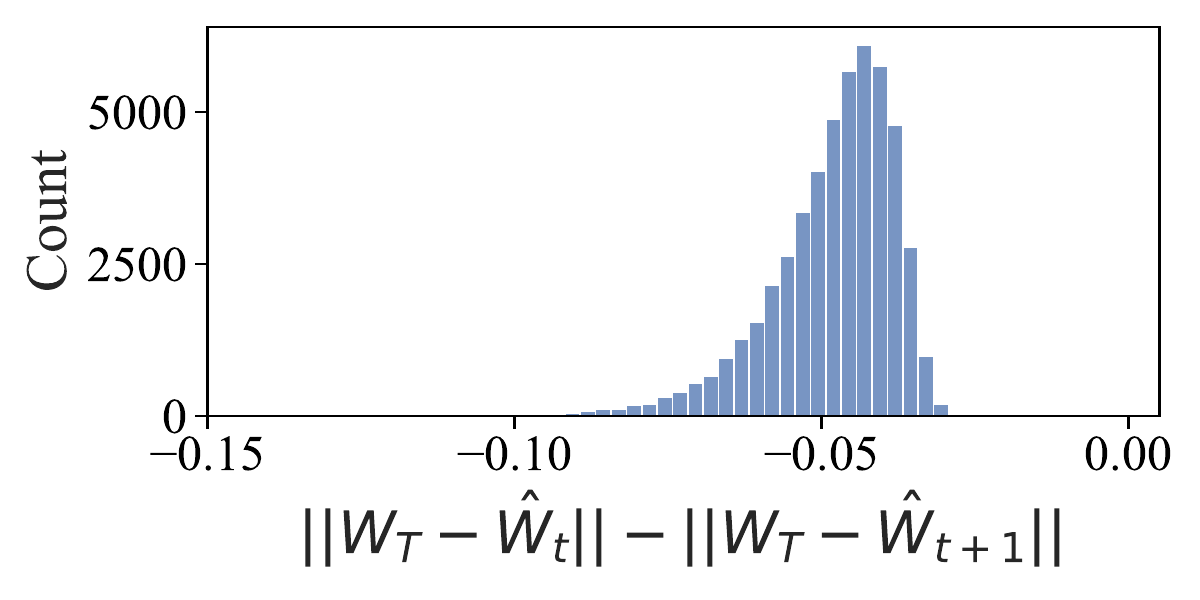}} 
    
    \subfloat[Step 60000]{\includegraphics[width=0.3\linewidth]{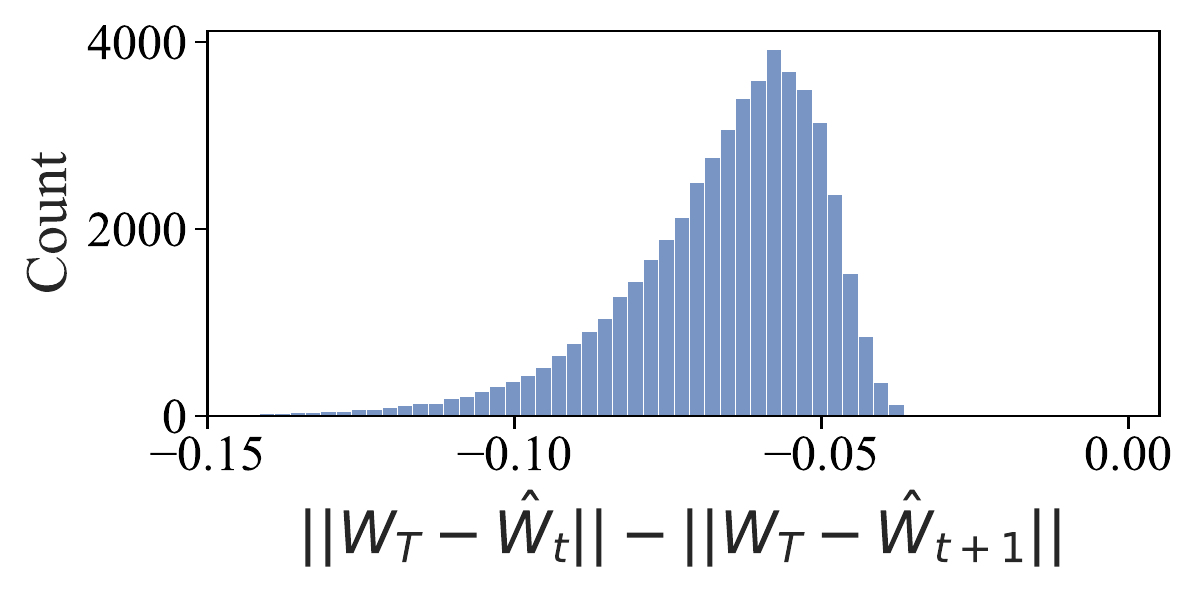}}
    \subfloat[Step 70000]{\includegraphics[width=0.3\linewidth]{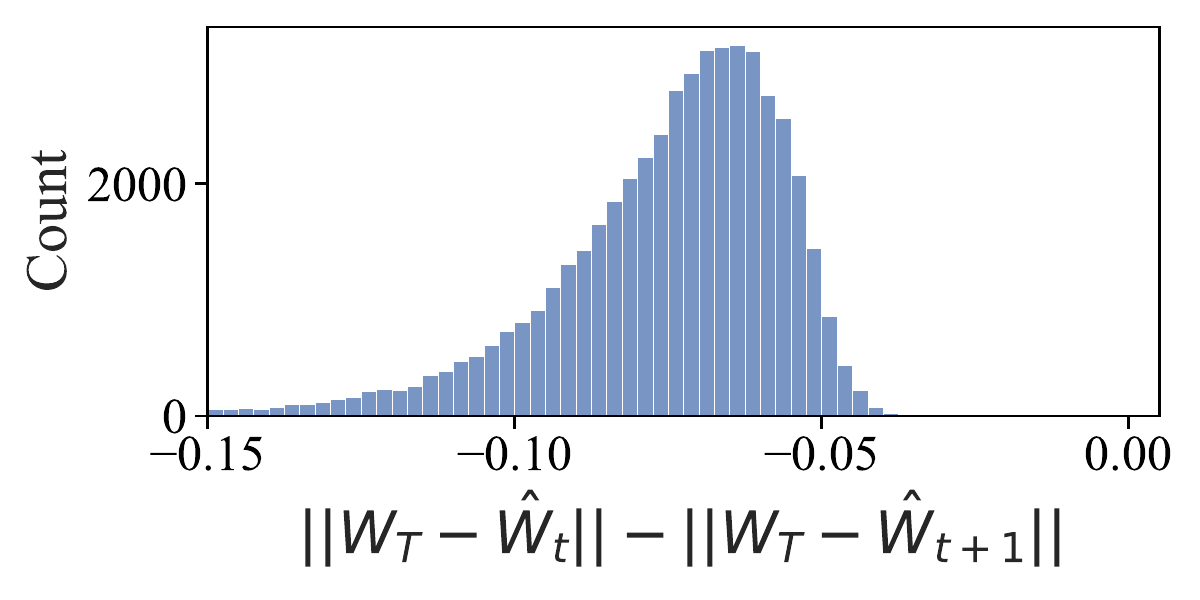}} 
    \subfloat[Training Finished (Epoch 200)]{\includegraphics[width=0.3\linewidth]{figures/hist_CIFAR100_resnet50_78125_100_dist_2_long.pdf}}
    \caption{ 
    \textbf{No individual update pushes the adversary's model $\hat{W}_t$ closer to the prover's final model $W_T$.} These are plotted in the same setting as Figure~\ref{fig:boa_cifar10_leftovers} to illustrate finer grained information on the evolution of the updates throughout training. Note that all gradients at all intermediate steps in training push the adversary's model away from $W_T$, as indicated by the negative values on the $x$-axis. We use a ResNet-50 on the CIFAR-100 dataset.
    }
    \label{fig:boa_cifar100}
\end{figure*}

\end{document}